\newtheorem{lemma}{Lemma}[section]
\newtheorem{proposition}{Proposition}[section]
\newtheorem{thm}{Theorem}[section]
\def\text#1{\mbox{\rm #1}}
\newcommand{\argsup}{\mathop{\rm argsup}}
\newcommand{\arginf}{\mathop{\rm arginf}}
\newcommand{\norm}[1]{\|{#1} \|}
\newcommand{\wh}{\widehat}
\newcommand{\wt}{\widetilde}
\newcommand{\opnorm}[1]{\|#1\|_{\rm op}}
\newcommand{\JS}{{\sf JS}}
\newcommand{\sig}{{\sf sigmoid}}
\newcommand{\relu}{{\sf ReLU}}
\title{Robust Estimation and Generative Adversarial Nets\thanks{A short version of the manuscript is published in ICLR 2019.}
}
\author{Chao Gao$^1$, Jiyi Liu$^2$, Yuan Yao$^3$ and Weizhi Zhu$^3$\\
~\\
$^1$University of Chicago, $^2$Yale University \\
and $^3$Hong Kong University of Science and Technology
}
\begin{document}
\maketitle

\begin{abstract}
Robust estimation under Huber's $\epsilon$-contamination model has become an important topic in statistics and theoretical computer science. Statistically optimal procedures such as Tukey's median and other estimators based on depth functions are impractical because of their computational intractability. In this paper, we establish an intriguing connection between $f$-GANs and various depth functions through the lens of $f$-Learning. Similar to the derivation of $f$-GANs, we show that these depth functions that lead to statistically optimal robust estimators can all be viewed as variational lower bounds of the total variation distance in the framework of $f$-Learning. This connection opens the door of computing robust estimators using tools developed for training GANs. In particular, we show in both theory and experiments that some appropriate structures of discriminator networks with hidden layers in GANs lead to statistically optimal robust location estimators for both Gaussian distribution and general elliptical distributions where first moment may not exist.
\smallskip

\textbf{Keywords:} robust statistics, neural networks, minimax rate, data depth, contamination model, Tukey median, GAN.
\end{abstract}


\section{Introduction}

In the setting of Huber's $\epsilon$-contamination model \citep{huber1964robust,huber1965robust}, one has i.i.d observations
\begin{equation}
X_1,...,X_n\sim (1-\epsilon)P_{\theta}+\epsilon Q, \label{eq:Huber}
\end{equation}
and the goal is to estimate the model parameter $\theta$. Under the data generating process (\ref{eq:Huber}), each observation has a $1-\epsilon$ probability to be drawn from $P_{\theta}$ and the other $\epsilon$ probability to be drawn from the contamination distribution $Q$. The presence of an unknown contamination distribution poses both statistical and computational challenges. For example, consider a normal mean estimation problem with $P_{\theta}=N(\theta,I_p)$. Due to the contamination of data, the sample average, which is optimal when $\epsilon=0$, can be arbitrarily far away from the true mean if $Q$ charges a positive probability at infinity. Moreover, even robust estimators such as coordinatewise median and geometric median are proved to be suboptimal under the setting of (\ref{eq:Huber}) \citep{chen2018robust,diakonikolas2016robust,lai2016agnostic}. The search for both statistically optimal and computationally feasible procedures has become a fundamental problem in areas including robust statistics and theoretical computer science.

It has been shown in \cite{chen2016general} that the minimax rate $\mathcal{R}(\epsilon)$ of estimating $\theta$ under Huber's $\epsilon$-contamination model takes the form of $\mathcal{R}(\epsilon)\asymp\mathcal{R}(0)\vee\omega(\epsilon,\Theta)$, where $\mathcal{R}(0)$ is the minimax rate of the problem when $\epsilon=0$, and $\omega(\epsilon,\Theta)$ is the modulus of continuity \citep{donoho1991geometrizing} between the loss function and the total variation distance with respect to the parameter space $\Theta$. The two terms in the minimax rate characterize the difficulty of the problem with both the statistical complexity and the influence of contamination. For the normal mean estimation problem, the minimax rate with respect to the squared $\ell_2$ loss is $\frac{p}{n}\vee\epsilon^2$ \citep{chen2018robust}, and is achieved by Tukey's median \citep{tukey1975mathematics}, defined as
\begin{equation}
\wh{\theta} = \argsup_{\eta\in\mathbb{R}^p}\inf_{\|u\|=1}\frac{1}{n}\sum_{i=1}^n\mathbb{I}\{u^T(X_i-\eta)\geq 0\},\label{eq:Tukey}
\end{equation}
the maximizer of Tukey's halfspace depth.
Despite the statistical optimality of Tukey's median, computation of (\ref{eq:Tukey}) is not tractable. In fact, even an approximate algorithm takes $O(e^{Cp})$ in time \citep{amenta2000regression,chan2004optimal,rousseeuw1998computing}.

Recent developments in theoretical computer science are focused on the search of computationally tractable algorithms for estimating $\theta$ under Huber's $\epsilon$-contamination model (\ref{eq:Huber}). The success of the efforts started from two fundamental papers \cite{diakonikolas2016robust,lai2016agnostic}, where two different but related computational strategies ``iterative filtering" and ``dimension halving" were proposed to robustly estimate the normal mean. These algorithms can provably achieve the minimax rate $\frac{p}{n}\vee\epsilon^2$ up to a poly-logarithmic factor in polynomial time. The main idea behind the two methods is the fact that a good robust moment estimator can be certified efficiently by higher moments. This idea was later further extended  \citep{diakonikolas2017being,du2017computationally,diakonikolas2016bayesian,diakonikolas2018sever,diakonikolas2018efficient,diakonikolas2018list,kothari2018robust} to develop robust and computable procedures for various other problems.


Compared with these computationally feasible procedures proposed in the recent literature for robust estimation, Tukey's median (\ref{eq:Tukey}) and other depth-based estimators \citep{rousseeuw1999regression,mizera2002depth,zhang2002some,mizera2004location,paindaveine2017halfspace} have some indispensable advantages in terms of their statistical properties. First, the depth-based estimators have clear objective functions that can be interpreted from the perspective of projection pursuit \citep{mizera2002depth}. Second, the depth-based procedures are adaptive to nuisance parameters in the models such as covariance structures, contamination proportion, and error distributions \citep{chen2018robust,gao2017robust}. In comparison, many of the computationally feasible procedures for robust mean estimation in the literature rely on the knowledge of covariance matrix, and sometimes the order of the contamination proportion as well. Even though these assumptions can be relaxed, nontrivial modifications of the algorithms are required for such extensions and sometimes statistical error rates will be affected. Last but not least, Tukey's depth and other depth functions are mostly designed for robust quantile estimation, while the recent advancements in the theoretical computer science literature are all focused on robust moments estimation. Although this is not an issue when it comes to the problem of normal mean estimation, the difference becomes fundamental for robust location estimation under general settings such as elliptical distributions where moments do not necessarily exist. For a thorough overview of statistical properties of depth-based estimators, we refer the readers to \cite{liu1999multivariate,zuo2000general,zuo2018general}.

Given the desirable statistical properties discussed above, this paper is focused on the development of computational strategies of depth-like procedures. Our key observation is that robust estimators that are maximizers of depth functions, including halfspace depth, regression depth and covariance matrix depth, can all be derived under the framework of $f$-GAN \citep{nowozin2016f}. As a result, these depth-based estimators can be viewed as minimizers of variational lower bounds of the total variation distance between the empirical measure and the model distribution. This observation allows us to leverage the recent developments in the deep learning literature to compute these variational lower bounds through neural network approximations. Our theoretical results give insights on how to choose appropriate neural network classes that lead to minimax optimal robust estimation under Huber's $\epsilon$-contamination model. The main contributions of the paper are listed below.
\begin{enumerate}
\item We identify an important subclass of $f$-GAN, called $f$-Learning (Section \ref{sec:f-learn}), which helps us to unify the understandings of various depth-based estimators, GANs, and MLE in a single framework. The connection between depth functions and $f$-GAN allows us to develop depth-like estimators that not only share good statistical properties of (\ref{eq:Tukey}), but can also be trained by stochastic gradient ascent/descent algorithms.
\item In order to choose an appropriate discriminator class for robust estimation, we establish a relation between (JS)-GAN optimization and feature matching (Proposition \ref{prop:local-JS}). This implies the necessity of hidden layers of neural network structures used in the GAN training. A neural network class without hidden layer is equivalent to matching linear features, and is thus not suitable for robust estimation.
\item We prove that rate-optimal robust location estimation for both Gaussian distribution (Theorem \ref{thm:TV-NN1} for TV-GAN and Theorem \ref{thm:JS-NN2} for JS-GAN with bounded activations, and Theorem \ref{thm:JS-DNN} for deep ReLU networks) and the general family of elliptical distributions (Theorem \ref{thm:elliptical}) can be achieved by GANs that use neural network discriminator classes with appropriate structures and regularizations. Extensive numerical experiments are conducted to verify our theoretical findings and show that these procedures can be computed in practice.
\end{enumerate}

Our work is also related to the recent literature on the investigation of statistical properties of GAN. For example, nonparametric density estimation using GAN is studied by \cite{liang2017well}. Provable guarantees of learning Gaussian distributions with quadratic discriminators are established by \cite{feizi2017understanding}. Theoretical guarantees of learning Gaussian mixtures, exponential families and injective neural network generators are obtained by \cite{bai2018approximability}. The result we obtain in this paper is the first theoretical guarantee of GAN in robust estimation under Huber's $\epsilon$-contamination model.

The rest of the paper is organized as follows. In Section \ref{sec:framework}, we introduce an $f$-Learning framework and discuss the connection between robust estimation and $f$-GAN. The theoretical results of robust Gaussian mean estimation using $f$-GAN are given in Section \ref{sec:main}. Results for deep ReLU networks are given in Section \ref{sec:deep}. An extension to robust location estimation for the family of Elliptical distributions is presented in Section \ref{sec:ellip} that includes both Gaussian distribution and Cauchy distribution whose moments do not exist. In Section \ref{sec:num}, we present extensive numerical studies of the proposed procedures. Section \ref{sec:disc} collects some discussions on the results of the paper and several possible extensions of the work. Finally, all the technical proofs are given in Section \ref{sec:pf}.

We close this section by introducing the notations used in the paper. For $a,b\in\mathbb{R}$, let $a\vee b=\max(a,b)$ and $a\wedge b=\min(a,b)$. For an integer $m$, $[m]$ denotes the set $\{1,2,...,m\}$. Given a set $S$, $|S|$ denotes its cardinality, and $\mathbb{I}_S$ is the associated indicator function. For two positive sequences $\{a_n\}$ and $\{b_n\}$, the relation $a_n\lesssim b_n$ means that $a_n\leq Cb_n$ for some constant $C>0$, and $a_n\asymp b_n$ if both $a_n\lesssim b_n$ and $b_n\lesssim a_n$ hold. For a vector $v\in\mathbb{R}^p$, $\norm{v}$ denotes the $\ell_2$ norm, $\|v\|_{\infty}$ the $\ell_{\infty}$ norm, annd $\|v\|_1$ the $\ell_1$ norm. For a matrix $A\in\mathbb{R}^{d_1\times d_2}$, we use $\opnorm{A}$ to denote its operator norm, which is its largest singular value. We use $\mathbb{P}$ and $\mathbb{E}$ to denote generic probability and expectation whose distribution is determined from the context. The symbol $E_P$ is used for the expectation operator under the distribution $P$. The sigmoid function and the rectified linear unit function (ReLU) are denoted by $\sig(x)=\frac{1}{1+e^{-x}}$ and $\relu(x)=\max(x,0)$.

\section{Robust Estimation and $f$-GAN}\label{sec:framework}

We start with the definition of $f$-divergence \citep{csiszar1964informationstheoretische,ali1966general}. Given a strictly convex function $f$ that satisfies $f(1)=0$, the $f$-divergence between two probability distributions $P$ and $Q$ is defined by
\begin{equation}
D_f(P\|Q)=\int f\left(\frac{p}{q}\right)dQ.\label{eq:f-div}
\end{equation}
Here, we use $p(\cdot)$ and $q(\cdot)$ to stand for the density functions of $P$ and $Q$ with respect to some common dominating measure. For a fully rigorous definition, see \cite{polyanskiy2014lecture}. Let $f^*$ be the convex conjugate of $f$. That is, $f^*(t)=\sup_{u\in\text{dom}_f}(ut-f(u))$. A variational lower bound of (\ref{eq:f-div}) is
\begin{equation}
D_f(P\|Q) \geq \sup_{T\in\mathcal{T}}\left[E_PT(X)-E_Qf^*(T(X))\right]. \label{eq:f-div-variational}
\end{equation}
Note that the inequality (\ref{eq:f-div-variational}) becomes an equality whenever the class $\mathcal{T}$ contains the function $f'\left(p/q\right)$ \citep{nguyen2010estimating}. For notational simplicity, we also use $f'$ for an arbitrary element of the subdifferential when the derivative does not exist. With i.i.d. observations $X_1,...,X_n\sim P$, the variational lower bound (\ref{eq:f-div-variational}) naturally leads to the following learning method
\begin{equation}
\wh{P}=\arginf_{Q\in\mathcal{Q}}\sup_{T\in\mathcal{T}}\left[\frac{1}{n}\sum_{i=1}^nT(X_i)-E_Qf^*(T(X))\right]. \label{eq:f-GAN}
\end{equation}
The formula (\ref{eq:f-GAN}) is a powerful and general way to learn the distribution $P$ from its i.i.d. observations. It is known as $f$-GAN \citep{nowozin2016f}, an extension of GAN \citep{goodfellow2014generative}, which stands for \emph{generative adversarial nets}. The idea is to find a $\wh{P}$ so that the best discriminator $T$ in the class $\mathcal{T}$ cannot tell the difference between $\wh{P}$ and the empirical distribution $\frac{1}{n}\sum_{i=1}^n\delta_{X_i}$.

\subsection{$f$-Learning: A Unified Framework} \label{sec:f-learn}

Our $f$-Learning framework is based on a special case of the variational lower bound (\ref{eq:f-div-variational}). That is,
\begin{equation}
D_f(P\|Q)\geq \sup_{\wt{Q}\in\wt{\mathcal{Q}}_{Q}}\left[E_Pf'\left(\frac{\wt{q}(X)}{q(X)}\right)-E_Qf^*\left(f'\left(\frac{\wt{q}(X)}{q(X)}\right)\right)\right],\label{eq:f-Learning-pop}
\end{equation}
where $\wt{q}(\cdot)$ stands for the density function of $\wt{Q}$. Note that here we allow the class $\wt{\mathcal{Q}}_{Q}$ to depend on the distribution $Q$ in the second argument of $D_f(P\|Q)$. Compare (\ref{eq:f-Learning-pop}) with (\ref{eq:f-div-variational}), and it is easy to realize that (\ref{eq:f-Learning-pop}) is a special case of (\ref{eq:f-div-variational}) with
\begin{equation}
\mathcal{T} = \mathcal{T}_Q = \left\{f'\left(\frac{\wt{q}}{q}\right): \wt{q}\in\wt{\mathcal{Q}}_Q\right\}.\label{eq:class-T-LR}
\end{equation}
Moreover, the inequality (\ref{eq:f-Learning-pop}) becomes an equality as long as $P\in\wt{\mathcal{Q}}_Q$. The sample version of (\ref{eq:f-Learning-pop}) leads to the following learning method
\begin{equation}
\wh{P} = \arginf_{Q\in\mathcal{Q}}\sup_{\wt{Q}\in\wt{\mathcal{Q}}_Q}\left[\frac{1}{n}\sum_{i=1}^nf'\left(\frac{\wt{q}(X_i)}{q(X_i)}\right)-E_Qf^*\left(f'\left(\frac{\wt{q}(X)}{q(X)}\right)\right)\right]. \label{eq:f-Learning}
\end{equation}
The learning method (\ref{eq:f-Learning}) will be referred to as \emph{$f$-Learning} in the sequel. It is a very general framework that covers many important learning procedures as special cases. For example, consider the special case where $\wt{\mathcal{Q}}_Q=\wt{\mathcal{Q}}$ independent of $Q$, $\mathcal{Q}=\wt{\mathcal{Q}}$, and $f(x)=x\log x$. Direct calculations give $f'(x)=\log x+1$ and $f^*(t)=e^{t-1}$. Therefore, (\ref{eq:f-Learning}) becomes
$$\wh{P}=\arginf_{Q\in\mathcal{Q}}\sup_{\wt{Q}\in\mathcal{Q}}\frac{1}{n}\sum_{i=1}^n\log\frac{\wt{q}(X_i)}{q(X_i)}=\argsup_{Q\in\mathcal{Q}}\frac{1}{n}\sum_{i=1}^n\log q(X_i),$$
which is the \emph{maximum likelihood estimator (MLE)}.

The $f$-Learning (\ref{eq:f-Learning}) is related to but is different from the rho-estimation framework \citep{baraud2016rho,baraud2017new}. The unpenalized version of the rho-estimator is defined by
$$\wh{P}=\arginf_{Q\in\mathcal{Q}}\sup_{\wt{Q}\in\mathcal{Q}}\frac{1}{n}\sum_{i=1}^n\psi\left(\sqrt{\frac{\wt{q}(X_i)}{q(X_i)}}\right),$$
where $\psi:[0,+\infty]\rightarrow [-1,1]$ is a non-decreasing function that satisfies $\psi(x)=-\psi(1/x)$. The rho-estimation framework has a different motivation. The function $\psi$ is designed to generalize the logarithmic function (which leads to the MLE) so that the induced procedure is robust to a Hellinger model misspecification. On the other hand, the $f$-Learning (\ref{eq:f-Learning}) is directly derived from a variational lower bound of the $f$-divergence.

\subsection{TV-Learning and Depth-Based Estimators} \label{sec:f-learndepth}

An important generator $f$ that we will discuss here is $f(x)=(x-1)_+$. This leads to the total variation distance $D_f(P\|Q)=\frac{1}{2}\int|p-q|$. With $f'(x)=\mathbb{I}\{x\geq 1\}$ and $f^*(t)=t\mathbb{I}\{0\leq t\leq 1\}$, the \emph{TV-Learning} is given by
\begin{equation}
\wh{P}=\arginf_{Q\in\mathcal{Q}}\sup_{\wt{Q}\in\mathcal{Q}_Q}\left[\frac{1}{n}\sum_{i=1}^n\mathbb{I}\left\{\frac{\wt{q}(X_i)}{q(X_i)}\geq 1\right\}-Q\left(\frac{\wt{q}}{q}\geq 1\right)\right]. \label{eq:TV-Learning}
\end{equation}
The TV-Learning (\ref{eq:TV-Learning}) is a very useful tool in robust estimation. A closely related idea was previously explored by \cite{yatracos1985rates,devroye2012combinatorial}. We illustrate its applications by several examples of depth-based estimators. 

In the first example, consider
$$\mathcal{Q}=\left\{N(\eta,I_p):\eta\in\mathbb{R}^p\right\},\quad \wt{\mathcal{Q}}_{\eta}=\left\{N(\wt{\eta},I_p):\|\wt{\eta}-\eta\|\leq r\right\}.$$
In other words, $\mathcal{Q}$ is the class of Gaussian location family, and $\wt{\mathcal{Q}}_{\eta}$ is taken to be a subset in a local neighborhood of $N(\eta,I_p)$. Then, with $Q=N(\eta,I_p)$ and $\wt{Q}=N(\wt{\eta},I_p)$, the event $\wt{q}(X)/q(X)\geq 1$ is equivalent to $\|X-\wt{\eta}\|^2\leq\|X-\eta\|^2$. Since $\|\wt{\eta}-\eta\|\leq r$, we can write $\wt{\eta}=\eta+\wt{r}u$ for some $\wt{r}\in\mathbb{R}$ and $u\in\mathbb{R}^p$ that satisfy $0\leq \wt{r}\leq r$ and $\|u\|=1$. Then, (\ref{eq:TV-Learning}) becomes
\begin{equation}
\wh{\theta}=\arginf_{\eta\in\mathbb{R}^p}\sup_{\substack{\|u\|=1\\0\leq \wt{r}\leq r}}\left[\frac{1}{n}\sum_{i=1}^n\mathbb{I}\left\{u^T(X_i-\eta)\geq\frac{\wt{r}}{2}\right\}-\mathbb{P}\left(N(0,1)\geq\frac{\wt{r}}{2}\right)\right].\label{eq:TV-Learning-finite-r}
\end{equation}
Letting $r\rightarrow 0$, we obtain (\ref{eq:Tukey}), the exact formula of \emph{Tukey's median}.
A traditional understanding of Tukey's median is that (\ref{eq:Tukey}) maximizes the halfspace depth \citep{donoho1992breakdown} so that $\wh{\theta}$ is close to the center of all one-dimensional projections of the data. In the $f$-Learning framework, $N(\wh{\theta},I_p)$ is understood to be the minimizer of a variational lower bound of the total variation distance.

The next example is a linear model $y|X\sim N(X^T\theta,1)$. Consider the following classes
\begin{eqnarray*}
\mathcal{Q} &=& \left\{P_{y,X}=P_{y|X}P_X: P_{y|X}=N(X^T\eta,1), \eta\in\mathbb{R}^p\right\}, \\
\wt{\mathcal{Q}}_{\eta} &=& \left\{P_{y,X}=P_{y|X}P_X: P_{y|X}=N(X^T\wt{\eta},1), \|\wt{\eta}-\eta\|\leq r\right\}.
\end{eqnarray*}
Here, $P_{y,X}$ stands for the joint distribution of $y$ and $X$. The two classes $\mathcal{Q}$ and $\wt{\mathcal{Q}}_{\eta}$ share the same marginal distribution $P_X$ and the conditional distributions are specified by $N(X^T\eta,1)$ and $N(X^T\wt{\eta},1)$, respectively. Follow the same derivation of Tukey's median, let $r\rightarrow 0$, and we obtain
\begin{equation}
\wh{\theta} = \argsup_{\eta\in\mathbb{R}^p}\inf_{\|u\|=1}\frac{1}{n}\sum_{i=1}^n\mathbb{I}\{u^TX_i(y_i-X_i^T\eta)\geq 0\}, \label{eq:regression-depth}
\end{equation}
which is the estimator that maximizes the \emph{regression depth} proposed by \cite{rousseeuw1999regression}. It is worth noting that the derivation of (\ref{eq:regression-depth}) does not depend on the marginal distribution $P_X$.

The last example is on covariance matrix estimation. For this task, we set $\mathcal{Q}=\{N(0,\Gamma):\Gamma\in\mathcal{E}_p\}$, where $\mathcal{E}_p$ is the class of all $p\times p$ covariance matrices. Inspired by the derivations of Tukey depth and regression depth, it is tempting to choose $\wt{\mathcal{Q}}_{\Gamma}$ in the neighborhood of $N(0,\Gamma)$. However, a naive choice would lead to a definition that is not even Fisher consistent. We propose a rank-one neighborhood, given by
\begin{equation}
\wt{\mathcal{Q}}_{\Gamma}=\left\{N(0,\wt{\Gamma}): \wt{\Gamma}^{-1}=\Gamma^{-1}+\wt{r}uu^T\in\mathcal{E}_p, |\wt{r}|\leq r, \|u\|=1\right\}.\label{eq:inv-nb}
\end{equation}
Then, a direct calculation gives
\begin{equation}
\mathbb{I}\left\{\frac{dN(0,\wt{\Gamma})}{dN(0,\Gamma)}(X)\geq 1\right\}=\mathbb{I}\left\{ \wt{r}|u^TX|^2\leq\log(1+\wt{r}u^T\Gamma u)\right\}.\label{eq:event-cov}
\end{equation}
Since $\lim_{\wt{r}\rightarrow 0} \frac{\log(1+\wt{r}u^T\Gamma u)}{\wt{r}u^T\Gamma u}=1$, the limiting event of (\ref{eq:event-cov}) is either $\mathbb{I}\{|u^TX|^2\leq u^T\Gamma u\}$ or $\mathbb{I}\{|u^TX|^2\geq u^T\Gamma u\}$, depending on whether $\wt{r}$ tends to zero from left or from right. Therefore, with the above $\mathcal{Q}$ and $\wt{\mathcal{Q}}_{\Gamma}$, (\ref{eq:TV-Learning}) becomes
\begin{eqnarray}
\label{eq:matrix-depth2} \wh{\Sigma} &=& \arginf_{\Gamma\in\mathcal{E}_p}\sup_{\|u\|=1}\Bigg[\left(\frac{1}{n}\sum_{i=1}^n\mathbb{I}\{|u^TX_i|^2\leq u^T\Gamma u\}-\mathbb{P}(\chi_1^2\leq 1)\right) \\
\nonumber && \qquad\qquad\qquad \vee\left(\frac{1}{n}\sum_{i=1}^n\mathbb{I}\{|u^TX_i|^2> u^T\Gamma u\}-\mathbb{P}(\chi_1^2> 1)\right)\Bigg],
\end{eqnarray}
under the limit $r\rightarrow 0$. Even though the definition of (\ref{eq:inv-nb}) is given by a rank-one neighborhood of the inverse covariance matrix, the formula (\ref{eq:matrix-depth2}) can also be derived with $\wt{\Gamma}^{-1}=\Gamma^{-1}+\wt{r}uu^T$ in (\ref{eq:inv-nb}) replaced by $\wt{\Gamma}=\Gamma+\wt{r}uu^T$ by applying the Sherman-Morrison formula.
A similar formula to (\ref{eq:matrix-depth2}) in the literature is given by
\begin{equation}
\wh{\Sigma} = \argsup_{\Gamma\in\mathcal{E}_p}\inf_{\|u\|=1}\left[\frac{1}{n}\sum_{i=1}^n\mathbb{I}\{|u^TX_i|^2\leq \beta u^T\Gamma u\}\wedge \frac{1}{n}\sum_{i=1}^n\mathbb{I}\{|u^TX_i|^2\geq \beta u^T\Gamma u\}\right], \label{eq:matrix-depth}
\end{equation}
which is recognized as the maximizer of what is known as the \emph{covariance matrix depth} function \citep{zhang2002some,chen2018robust,paindaveine2017halfspace}. The $\beta$ in (\ref{eq:matrix-depth}) is a scalar defined through the equation $\mathbb{P}(N(0,1)\leq \sqrt{\beta})=3/4$. It is proved in \cite{chen2018robust} that $\wh{\Sigma}$ achieves the minimax rate under Huber's $\epsilon$-contamination model. While the formula (\ref{eq:matrix-depth2}) can be derived from TV-Learning with discriminators in the form of $\mathbb{I}\left\{\frac{dN(0,\wt{\Gamma})}{dN(0,\Gamma)}(X)\geq 1\right\}$, a special case of (\ref{eq:class-T-LR}), the formula (\ref{eq:matrix-depth}) can be derived directly from TV-GAN with discriminators in the form of $\mathbb{I}\left\{\frac{dN(0,\beta\wt{\Gamma})}{dN(0,\beta\Gamma)}(X)\geq 1\right\}$ by following a similar rank-one neighborhood argument.

\subsection{From $f$-Learning to $f$-GAN}

The depth-based estimators (\ref{eq:Tukey}), (\ref{eq:regression-depth}) and (\ref{eq:matrix-depth}) are all proved to be statistically optimal under Huber's contamination model \citep{chen2018robust,gao2017robust}. This shows the importance of TV-Learning in robust estimation. However, it is well-known that depth-based estimators are very hard to compute \citep{amenta2000regression,van1999efficient,rousseeuw1998computing}, which limits their applications only for very low-dimensional problems. On the other hand, the general $f$-GAN framework (\ref{eq:f-GAN}) has been successfully applied to learn complex distributions and images in practice \citep{goodfellow2014generative,radford2015unsupervised,salimans2016improved}. The major difference that gives the computational advantage to $f$-GAN is its flexibility in terms of designing the discriminator class $\mathcal{T}$ using neural networks compared with the pre-specified choice (\ref{eq:class-T-LR}) in $f$-Learning. While $f$-Learning provides a unified perspective in understanding various depth-based procedures in robust estimation, we can step back into the more general $f$-GAN for its computational advantages, and to design efficient computational strategies. 
However, there are at least two questions that are unclear:
\begin{enumerate}
\item How to choose the function $f$ that leads to robust learning procedures which are easy to optimize?
\item How to specify the discriminator class to learn the parameter of interest with minimax rate under Huber's $\epsilon$-contamination model?
\end{enumerate}
In the rest of the paper, we will study a robust mean estimation problem in detail to answer these questions and illustrate the power of $f$-GAN in robust estimation.

\section{Robust Mean Estimation via GAN}\label{sec:main}

In this section, we focus on the problem of robust mean estimation under Huber's $\epsilon$-contamination model. Our goal is to reveal how the choice of the class of discriminators affects robustness and statistical optimality under the simplest possible setting. That is, we have i.i.d. observations $X_1,...,X_n\sim (1-\epsilon)N(\theta,I_p)+\epsilon Q$, and we need to estimate the unknown location $\theta\in\mathbb{R}^p$ with the contaminated data. Our goal is to achieve the minimax rate $\frac{p}{n}\vee\epsilon^2$ with respect to the squared $\ell_2$ loss uniformly over all $\theta\in\mathbb{R}^p$ and all $Q$.

\subsection{Results for TV-GAN}\label{sec:TV-GAN}

We start with the total variation GAN (TV-GAN) with $f(x)=(x-1)_+$ in (\ref{eq:f-GAN}). For the Gaussian location family, (\ref{eq:f-GAN}) can be written as
\begin{equation}
\wh{\theta}=\arginf_{\eta\in\mathbb{R}^p}\sup_{D\in\mathcal{D}}\left[\frac{1}{n}\sum_{i=1}^nD(X_i)-E_{N(\eta,I_p)}D(X)\right],\label{eq:TV-GAN-NN1}
\end{equation}
with $T(x)=D(x)$ in (\ref{eq:f-GAN}).
Now we need to specify the class of discriminators $\mathcal{D}$ to solve the classification problem between $N(\eta,I_p)$ and the empirical distribution $\frac{1}{n}\sum_{i=1}^n\delta_{X_i}$. One of the simplest discriminator classes is the logistic regression,
\begin{equation}
\mathcal{D}=\left\{D(x)=\sig(w^Tx+b): w\in\mathbb{R}^p,b\in\mathbb{R}\right\}. \label{eq:TV-NN1}
\end{equation}
With $D(x)=\sig(w^Tx+b)$ in (\ref{eq:TV-NN1}), the procedure (\ref{eq:TV-GAN-NN1}) can be viewed as a smoothed version of TV-Learning (\ref{eq:TV-Learning}). To be specific, the sigmoid function $\sig(w^Tx+b)$ tends to an indicator function as $\|w\|\rightarrow\infty$, which leads to a procedure very similar to (\ref{eq:TV-Learning-finite-r}). In fact, the class (\ref{eq:TV-NN1}) is richer than the one used in (\ref{eq:TV-Learning-finite-r}), and thus (\ref{eq:TV-GAN-NN1}) can be understood as the minimizer of a sharper variational lower bound than that of (\ref{eq:TV-Learning-finite-r}).

\begin{thm}\label{thm:TV-NN1}
Assume $\frac{p}{n}+\epsilon^2\leq c$ for some sufficiently small constant $c>0$.
With i.i.d. observations $X_1,...,X_n\sim (1-\epsilon)N(\theta,I_p)+\epsilon Q$, the estimator $\wh{\theta}$ defined by (\ref{eq:TV-GAN-NN1}) satisfies
$$\|\wh{\theta}-\theta\|^2 \leq C\left(\frac{p}{n}\vee\epsilon^2\right),$$
with probability at least $1-e^{-C'(p+n\epsilon^2)}$ uniformly over all $\theta\in\mathbb{R}^p$ and all $Q$. The constants $C,C'>0$ are universal.
\end{thm}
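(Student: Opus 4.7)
The plan is to view the estimator $\wh\theta$ as the minimizer of the empirical integral probability metric
$$F(\eta) = \sup_{D\in\mathcal{D}}\left[\frac{1}{n}\sum_{i=1}^n D(X_i) - E_{N(\eta,I_p)}D(X)\right],$$
and exploit the basic inequality $F(\wh\theta) \le F(\theta)$. So the proof reduces to (a) an upper bound on $F(\theta)$ and (b) a lower bound on $F(\eta)$ in terms of $\|\eta-\theta\|$.

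For the upper bound, let $P^\star = (1-\epsilon)N(\theta,I_p) + \epsilon Q$ denote the data-generating distribution. For any $D \in \mathcal{D}$, write
$$\frac{1}{n}\sum_{i=1}^n D(X_i) - E_{N(\theta,I_p)}D = \left[\frac{1}{n}\sum_{i=1}^n D(X_i) - E_{P^\star}D\right] + \epsilon\bigl[E_Q D - E_{N(\theta,I_p)}D\bigr].$$
Since every $D(x)=\sig(w^Tx+b) \in [0,1]$, the contamination term is bounded by $\epsilon$ uniformly in $D$. The empirical-process term is controlled by VC theory: the class $\mathcal{D}$ is a bounded VC-subgraph class, because its level sets $\{\sig(w^Tx+b) \ge t\} = \{w^Tx+b \ge \sig^{-1}(t)\}$ are halfspaces in $\mathbb{R}^p$, which form a VC class of dimension $p+1$. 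A standard uniform bound (e.g.\ via Dudley's entropy integral or a chaining argument) then gives $\sup_{D\in\mathcal{D}} |\frac{1}{n}\sum D(X_i) - E_{P^\star} D| \lesssim \sqrt{p/n}$ with probability at least $1 - e^{-C(p+n\epsilon^2)}$. Combining, $F(\theta) \lesssim \sqrt{p/n} + \epsilon$.

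For the lower bound, fix $\eta$ and set $\Delta = \eta-\theta$, $u = \Delta/\|\Delta\|$. For $t > 0$ and $s\in\mathbb{R}$, take the candidate discriminator $D_{t,s}(x) = \sig\bigl(t(u^T x - s)\bigr) \in \mathcal{D}$ and let $t\to\infty$; by bounded convergence, $F(\eta)$ is bounded below by
$$\sup_{\|u\|=1,\, s\in\mathbb{R}}\left[\frac{1}{n}\sum_{i=1}^n \mathbb{I}\{u^T X_i \ge s\} - \mathbb{P}_{N(\eta,I_p)}(u^T X \ge s)\right].$$
Choose $u$ as above and $s = u^T\theta + \|\Delta\|/2$, so that under $N(\theta,I_p)$ the halfspace has probability $\Phi(\|\Delta\|/2)$ while under $N(\eta,I_p)$ it has probability $1-\Phi(\|\Delta\|/2)$; the population gap is $2\Phi(\|\Delta\|/2)-1 \asymp \|\Delta\|\wedge 1$. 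Subtracting the contamination contribution (at most $\epsilon$) and a uniform empirical-process error over halfspaces (at most $C\sqrt{p/n}$, again by the VC dimension of halfspaces being $p+1$), we obtain, with the same high probability,
$$F(\eta) \;\gtrsim\; \bigl(\|\eta-\theta\|\wedge 1\bigr) - C\bigl(\sqrt{p/n}+\epsilon\bigr)\quad\text{uniformly in }\eta.$$
Plugging $\eta = \wh\theta$ into this lower bound, combining with $F(\wh\theta) \le F(\theta) \lesssim \sqrt{p/n}+\epsilon$, and using the smallness assumption $p/n+\epsilon^2 \le c$ to eliminate the truncation at $1$, yields the stated rate $\|\wh\theta-\theta\|^2 \lesssim p/n \vee \epsilon^2$, and squaring then gives the $e^{-C'(p+n\epsilon^2)}$ failure probability.

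The main obstacle is the uniform lower bound on $F(\eta)$: we must control a supremum indexed by $(u,s)$ uniformly in $\eta$ rather than for a fixed $\eta$, which is essential because $\wh\theta$ is data-dependent. This is what forces us to go through the halfspace VC class rather than a pointwise Hoeffding bound. A secondary technical point is justifying the passage $t\to\infty$ from sigmoids to indicators inside the sup — this is a straightforward bounded-convergence argument, but it is what couples the approximation power of logistic discriminators to the projection-pursuit structure underlying Tukey's median, closing the loop with the $f$-Learning discussion of Section~\ref{sec:f-learndepth}.
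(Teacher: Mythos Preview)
Your proposal is correct and follows essentially the same structure as the paper: the basic inequality $F(\wh\theta)\le F(\theta)$, VC control of the discriminator class (the paper's Lemma~\ref{lem:EP-NN1}), the uniform $\epsilon$-bound on the contamination contribution, and a specific discriminator choice to extract $\|\wh\theta-\theta\|$. The one technical variation is in the lower bound step: the paper keeps a fixed unit-norm sigmoid $D(x)=\sig(u^T(x-\theta))$ and invokes local linearity of $f(t)=\int(1+e^{z+t})^{-1}\phi(z)\,dz$ at $t=0$, whereas you send $t\to\infty$ to reduce to halfspace indicators---this works too but costs a second VC bound (over halfspaces rather than sigmoids), and be aware that with your orientation $u=\Delta/\|\Delta\|$ the population gap is $(1-\Phi(\|\Delta\|/2))-\Phi(\|\Delta\|/2)<0$, so you should flip to $u=-\Delta/\|\Delta\|$ (or use the complementary halfspace) to get the sign right.
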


Though TV-GAN can achieve the minimax rate, it may suffer from optimization difficulties especially when the distributions $Q$ and $N(\theta, I_p)$ are far away from each other. The main obstacle is, with optimization based on gradient, the discriminator may be stuck in a local maximum which will then pass wrong signals to the generator. We illustrate this point with a simple one-dimensional example in Figure \ref{fig:tv_landscape_10}, where samples are drawn from $(1-\epsilon)N(1, 1) + \epsilon N(10, 1)$ with $\epsilon=0.2$, and we optimize (\ref{eq:TV-GAN-NN1}) via alternative gradient ascent and descent shown in Algorithm \ref{alg:jsgan}. Even with a good initialization, TV-GAN in the form of (\ref{eq:TV-GAN-NN1}) will continuously increase the value of $\eta$ (from the light area to the dark area in the heatmap) if $w$ cannot achieve its global maximum, and thus fails to learn the saddle point. However, it is almost impossible for ${w}$ to correct its way from $w\to\infty$ to $w\to-\infty$ simply by the information of its local gradient. In comparison, the landscape becomes better when $Q$ and $N(\theta, I_p)$ are close, where the signal passed to the generator becomes weak before being stuck in the local maximum, as shown in Figure \ref{fig:tv_landscape_15}.
\begin{figure}
\includegraphics[width=.5\textwidth]{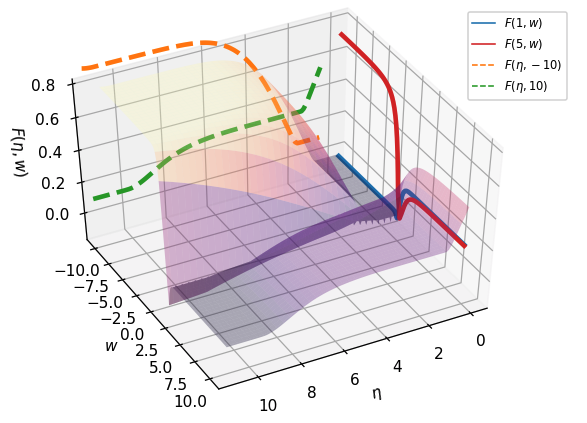}
\includegraphics[width=.5\textwidth]{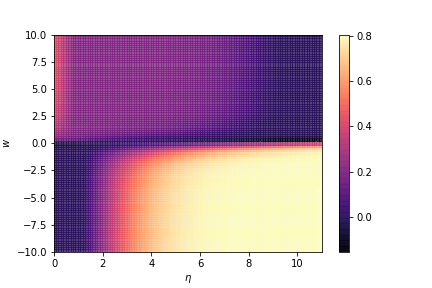}
\caption{Landscape of $F(\eta, w)= \sup_{b}[E_P\sig(wX+b)-E_{N(\eta,1)}\sig(wX+b)]$, where $b$ is maximized out for visualization. Samples are drawn from $P=(1-\epsilon)N(1, 1) + \epsilon N(10, 1)$ with $\epsilon=0.2$. Left: a surface plot of $F(\eta,w)$. The solid curves are marginal functions for fixed $\eta$'s: $F(1, w)$ (red) and $F(5, w)$ (blue), and the dash curves are marginal functions for fixed $w$'s: $F(\eta, -10)$ (orange) and $F(\eta, 10)$ (green). Right: a heatmap of $F(\eta, w)$. It is clear that $\tilde{F}(w) = F(\eta, w)$ has two local maxima for a given $\eta$, achieved at $w=+\infty$ and $w=-\infty$. In fact, the global maximum for $\tilde{F}(w)$ has a phase transition from $w=+\infty$ to $w=-\infty$ as $\eta$ grows. For example, the maximum is achieved at $w=+\infty$ when $\eta=1$ (blue solid) and is achieved at $w=-\infty$ when $\eta=5$ (red solid). Unfortunately, even if we initialize with ${\eta}_0=1$ and ${w}_0> 0$, gradient ascents on $\eta$ will only increase the value of  $\eta$ (green dash),  and thus as long as the discriminator cannot reach the global maximizer, $w$ will be stuck in the positive half space $\{w: w>0\}$ and further increase the value of $\eta$.}\label{fig:tv_landscape_10} 
\end{figure} 
\begin{figure}
\includegraphics[width=.5\textwidth]{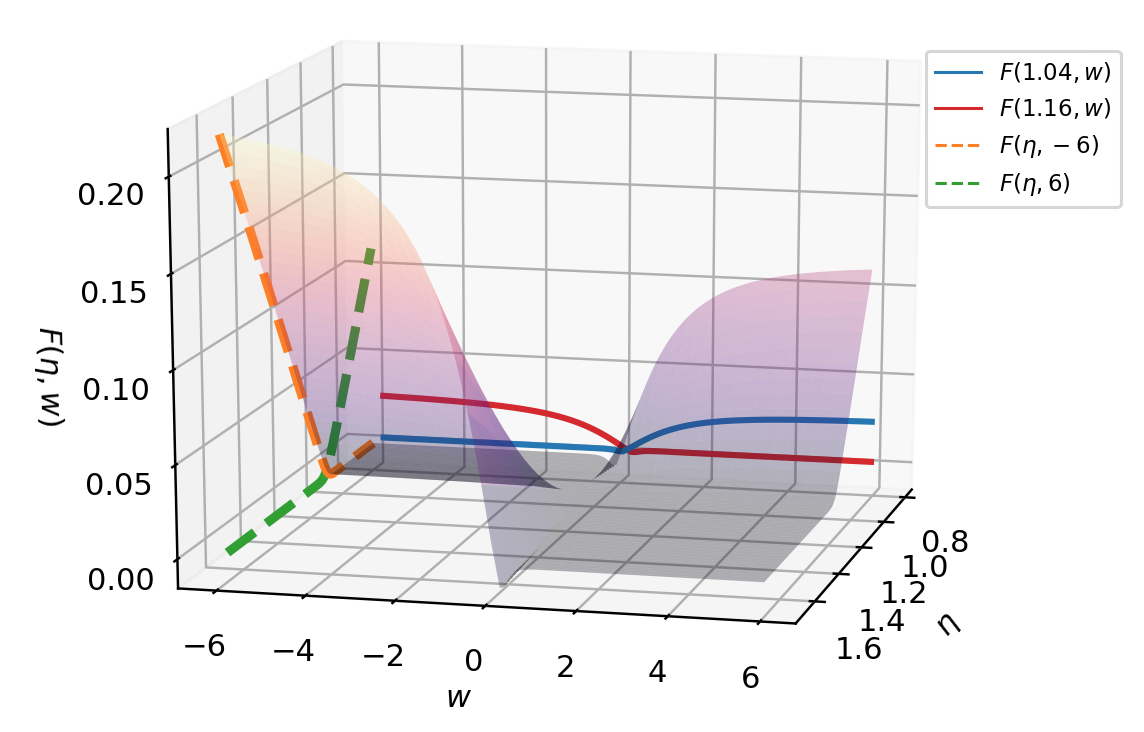}
\includegraphics[width=.5\textwidth]{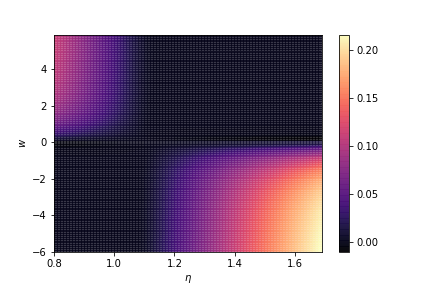}
\caption{Landscape of $F(\eta, w)= \sup_{b}[E_P\sig(wX+b)-E_{N(\eta,1)}\sig(wX+b)]$, where $b$ is maximized out for visualization. Samples are drawn from $P=(1-\epsilon)N(1, 1) + \epsilon N(1.5, 1)$ with $\epsilon=0.2$. Left: a surface plot of $F(\eta,w)$. Right: a heatmap of $F(\eta, w)$. Compared with the heatmap in Figure \ref{fig:tv_landscape_10}, the landscape becomes better in the sense that no matter whether we start from the left-top area or the right-bottom area of the heatmap, gradient ascent on $\eta$ does not consistently increase or decrease the value of $\eta$. This is because the signal becomes weak when it is close to the saddle point around $\eta=1$.}\label{fig:tv_landscape_15}
\end{figure} 


\subsection{Results for JS-GAN}

Given the intractable optimization property of TV-GAN, we next turn to Jensen-Shannon GAN (JS-GAN) with
$$f(x)=x\log x-(x+1)\log\frac{x+1}{2}.$$
The estimator is defined by
\begin{equation}
\wh{\theta}=\arginf_{\eta\in\mathbb{R}^p}\sup_{D\in\mathcal{D}}\left[\frac{1}{n}\sum_{i=1}^n\log D(X_i) + E_{N(\eta,I_p)}\log(1-D(X))\right]+\log 4, \label{eq:JS-GAN-gen}
\end{equation}
with $T(x)=\log D(x)$ in (\ref{eq:f-GAN}).
This is exactly the original GAN \citep{goodfellow2014generative} specialized to the normal mean estimation problem. The advantages of JS-GAN over other forms of GAN have been studied extensively in the literature \citep{lucic2017gans,kurach2018gan}.

Before presenting theoretical properties of (\ref{eq:JS-GAN-gen}), we first show a simple numerical result that implies important consequences on the choice of the discriminator class $\mathcal{D}$. Consider i.i.d. observations drawn from the one-dimensional contamination model $(1-\epsilon)N(\theta,1)+ \epsilon N(t,1)$ with $\theta=1$ and $\epsilon=0.2$. We consider two estimators in the form of (\ref{eq:JS-GAN-gen}) that use different discriminator classes. The first one is the same logistic regression class defined in (\ref{eq:TV-NN1}), and the second one is the class of neural networks with one hidden layer. Then, the values of the two estimators are plotted against $t$ in Figure \ref{fig:js_landscape}. It is clear that the two estimators have completely different behaviors. For the estimator trained by JS-GAN using a logistic regression discriminator class, it is always close to $0.2+0.8t$, which is the grand mean of the entire distribution $(1-\epsilon)N(\theta,1)+ \epsilon N(t,1)$. Thus, the estimator is not robust, and its deviation from $\theta$ will become arbitrarily large when the value of $t$ is increased. On the other hand, with an extra hidden layer built into the neural nets, the second estimator is always close to the mean $\theta$ that we want to learn, regardless of the value of $t$. The green curve in Figure \ref{fig:js_landscape} first increases as $t$ increases, but it eventually converges to $\theta=1$ as $t$ further increases. The hardest contamination distribution $N(t,1)$ is the one with a $t$ that is not far away from $\theta$, which is well predicted by the minimax theory of robust estimation \citep{chen2018robust}.
\begin{figure}[H]
\centering
\includegraphics[width=.50\textwidth]{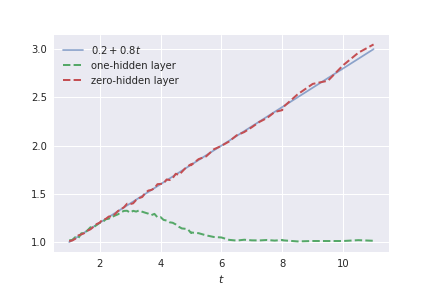}
\caption{The solid blue line is the mean of $(1-\epsilon)N(\theta,1)+ \epsilon N(t,1)$ with $\theta=1$ and $\epsilon=0.2$. At each level of $t$, we consider the estimators in the form of (\ref{eq:JS-GAN-gen}) that use different discriminator classes. The JS-GAN using discriminators without hidden layers always gives an estimator close to $0.2+0.8t$ (green dash line), while the JS-GAN using discriminators with one hidden layer leads to robust estimation (red dash line).}\label{fig:js_landscape} 
\end{figure}

To understand why and how the class of the discriminators affects the robustness property of JS-GAN, we introduce a new concept called restricted Jensen-Shannon divergence. Let $g:\mathbb{R}^p\rightarrow\mathbb{R}^d$ be a function that maps a $p$-dimensional observation to a $d$-dimensional feature space. The restricted Jensen-Shannon divergence between two probability distributions $P$ and $Q$ with respect to the feature $g$ is defined as
$$\JS_g(P,Q)=\sup_{w\in\mathcal{W}}\left[E_P\log\sig(w^Tg(X))+E_Q\log(1-\sig(w^Tg(X)))\right]+\log 4.$$
In other words, $P$ and $Q$ are distinguished by a logistic regression classifier that uses the feature $g(X)$. It is easy to see that $\JS_g(P,Q)$ is a variational lower bound of the original Jensen-Shannon divergence. The key property of $\JS_g(P,Q)$ is given by the following proposition.
\begin{proposition}\label{prop:local-JS}
Assume $\mathcal{W}$ is a convex set that contains an open neighborhood of $0$. Then, $\JS_g(P,Q)=0$ if and only if $E_Pg(X)=E_Qg(X)$.
\end{proposition}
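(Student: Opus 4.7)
The plan is to identify $w=0$ as the critical point that witnesses $\JS_g=0$ and reduce the claim to a first-order optimality calculation. Write
\[
F(w) = E_P\log\sig(w^T g(X)) + E_Q\log(1-\sig(w^T g(X))),
\]
so $\JS_g(P,Q) = \sup_{w\in\mathcal{W}} F(w) + \log 4$. Since $\sig(0)=1/2$, one has $F(0) = -\log 4$, hence $\sup_{w\in\mathcal{W}} F(w) \geq -\log 4$ and $\JS_g(P,Q)\geq 0$ always. In particular, $\JS_g(P,Q)=0$ if and only if $w=0$ maximizes $F$ on $\mathcal{W}$.

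The first key step is concavity. The function $t\mapsto \log\sig(t) = -\log(1+e^{-t})$ has second derivative $-\sig(t)(1-\sig(t))\leq 0$, so it is concave, and by symmetry $t\mapsto\log(1-\sig(t)) = \log\sig(-t)$ is concave as well. Composing with the linear map $w\mapsto w^T g(x)$ and integrating preserves concavity, so $F$ is concave on all of $\mathbb{R}^d$ (and in particular on the convex set $\mathcal{W}$).

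The second key step is computing $\nabla F(0)$. Differentiating under the integral and using $\sig'(t) = \sig(t)(1-\sig(t))$ gives
\[
\nabla F(w) = E_P\bigl[(1-\sig(w^T g(X)))\, g(X)\bigr] - E_Q\bigl[\sig(w^T g(X))\, g(X)\bigr],
\]
and evaluating at $w=0$ collapses the sigmoid factors to $1/2$, yielding $\nabla F(0) = \tfrac{1}{2}(E_P g(X) - E_Q g(X))$.

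Finally, I combine the two. For the ``if'' direction, suppose $E_P g(X)=E_Q g(X)$; then $\nabla F(0)=0$, and concavity of $F$ together with $0\in\mathcal{W}$ implies $w=0$ is a global maximizer on $\mathcal{W}$, so $\JS_g(P,Q)=0$. For the ``only if'' direction, suppose $\JS_g(P,Q)=0$, so $w=0$ is a maximizer of $F$ over $\mathcal{W}$; because $\mathcal{W}$ contains an open neighborhood of $0$, the unconstrained first-order condition $\nabla F(0)=0$ must hold, which forces $E_P g(X)=E_Q g(X)$. The only subtlety worth checking is the interchange of differentiation and expectation in computing $\nabla F$, which is justified by the uniform boundedness of the sigmoid factor and standard dominated convergence provided $g(X)$ is integrable under $P$ and $Q$ — an implicit regularity assumption needed for $\JS_g$ to even be well-defined.
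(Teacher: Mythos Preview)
Your proof is correct and follows essentially the same approach as the paper: both establish concavity of $F$, compute $\nabla F(0)=\tfrac12(E_Pg(X)-E_Qg(X))$, and combine these with the fact that $\mathcal{W}$ contains an open neighborhood of $0$ to characterize when $w=0$ is the maximizer. The only cosmetic differences are that the paper computes the full Hessian explicitly whereas you argue concavity via the scalar map $t\mapsto\log\sig(t)$, and you add a remark on differentiating under the integral.
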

\begin{proof}
Define $F(w)=E_P\log\sig(w^Tg(X))+E_Q\log(1-\sig(w^Tg(X)))+\log 4$, so that $\JS_g(P,Q)=\sup_{w\in\mathcal{W}}F(w)$. The gradient and Hessian of $F(w)$ are given by
\begin{eqnarray*}
\nabla F(w) &=& E_P\frac{e^{-w^Tg(X)}}{1+e^{-w^Tg(X)}}g(X) - E_Q\frac{e^{w^Tg(X)}}{1+e^{w^Tg(X)}}g(X), \\
\nabla^2 F(w) &=& -E_P \frac{e^{w^Tg(X)}}{(1+e^{w^Tg(X)})^2}g(X)g(X)^T - E_Q \frac{e^{-w^Tg(X)}}{(1+e^{-w^Tg(X)})^2}g(X)g(X)^T.
\end{eqnarray*}
Therefore, $F(w)$ is concave in $w$, and $\sup_{w\in\mathcal{W}}F(w)$ is a convex optimization with a convex $\mathcal{W}$. Suppose $\JS_g(P,Q)=0$. Then $\sup_{w\in\mathcal{W}}F(w)=0=F(0)$, which implies $\nabla F(0)=0$, and thus we have $E_Pg(X)=E_Qg(X)$. Now suppose $E_Pg(X)=E_Qg(X)$, which is equivalent to $\nabla F(0)=0$. Therefore, $w=0$ is a stationary point of a concave function, and we have $\JS_g(P,Q)=\sup_{w\in\mathcal{W}}F(w)=F(0)=0$.
\end{proof}

The proposition asserts that $\JS_g(\cdot,\cdot)$ cannot distinguish $P$ and $Q$ if the feature $g(X)$ has the same expected value under the two distributions. This \textit{generalized moment matching effect} has also been studied by \cite{liu2017approximation} for general $f$-GANs. However, the linear discriminator class considered in \cite{liu2017approximation} is parameterized in a different way compared with the discriminator class here.

When we apply Proposition \ref{prop:local-JS} to robust mean estimation, the JS-GAN is trying to match the values of $\frac{1}{n}\sum_{i=1}^ng(X_i)$ and $E_{N(\eta,I_\eta)}g(X)$ for the feature $g(X)$ used in the logistic regression classifier. This explains what we observed in our numerical experiments. A neural net without any hidden layer is equivalent to a logistic regression with a linear feature $g(X)=(X^T,1)^T\in\mathbb{R}^{p+1}$. Therefore, whenever $\eta=\frac{1}{n}\sum_{i=1}^nX_i$, we have $\JS_g\left(\frac{1}{n}\sum_{i=1}^n\delta_{X_i},N(\eta,I_p)\right)=0$, which implies that the sample mean is a global maximizer of (\ref{eq:JS-GAN-gen}). On the other hand, a neural net with at least one hidden layers involves a nonlinear feature function $g(X)$, which is the key that leads to the robustness of (\ref{eq:JS-GAN-gen}).

We will show rigorously that a neural net with one hidden layer is sufficient to make (\ref{eq:JS-GAN-gen}) robust and optimal. Consider the following class of discriminators,
\begin{equation}
\mathcal{D}=\left\{D(x)=\sig\left(\sum_{j\geq 1}w_j\sigma(u_j^Tx+b_j)\right): \sum_{j\geq 1}|w_j|\leq \kappa, u_j\in\mathbb{R}^p, b_j\in\mathbb{R}\right\}.\label{eq:JS-NN2}
\end{equation}
The class (\ref{eq:JS-NN2}) consists of two-layer neural network functions. While the dimension of the input layer is $p$, the dimension of the hidden layer can be arbitrary, as long as the weights have a bounded $\ell_1$ norm. The nonlinear activation function $\sigma(\cdot)$ is allowed to take 1) indicator: $\sigma(x)=\mathbb{I}\{x\geq 1\}$, 2) sigmoid: $\sigma(x)=\frac{1}{1+e^{-x}}$, 3) ramp: $\sigma(x)=\max(\min(x+1/2,1),0)$. Other bounded activation functions are also possible, but we do not exclusively list them. The rectified linear unit (ReLU) will be studied in Section \ref{sec:deep}.

\begin{thm}\label{thm:JS-NN2}
Consider the estimator $\wh{\theta}$ defined by (\ref{eq:JS-GAN-gen}) with $\mathcal{D}$ specified by (\ref{eq:JS-NN2}). Assume $\frac{p}{n}+\epsilon^2\leq c$ for some sufficiently small constant $c>0$, and set $\kappa=O\left(\sqrt{\frac{p}{n}}+\epsilon\right)$.
With i.i.d. observations $X_1,...,X_n\sim (1-\epsilon)N(\theta,I_p)+\epsilon Q$, we have
$$\|\wh{\theta}-\theta\|^2 \leq C\left(\frac{p}{n}\vee\epsilon^2\right),$$
with probability at least $1-e^{-C'(p+n\epsilon^2)}$ uniformly over all $\theta\in\mathbb{R}^p$ and all $Q$. The constants $C,C'>0$ are universal.
\end{thm}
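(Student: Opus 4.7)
The plan is to follow the standard minimum-distance strategy. Let $\wh{P}_n = \frac{1}{n}\sum_{i=1}^n \delta_{X_i}$, $P^* = (1-\epsilon)N(\theta,I_p) + \epsilon Q$, $\Delta := \|\wh\theta - \theta\|$, and
\[
F_n(\eta) := \sup_{D \in \mathcal{D}} \Bigl\{ E_{\wh{P}_n} \log D(X) + E_{N(\eta, I_p)}\log(1-D(X))\Bigr\} + \log 4.
\]
By definition $\wh\theta = \arginf_\eta F_n(\eta)$, so $F_n(\wh\theta) \leq F_n(\theta)$. The task reduces to (a) an upper bound on $F_n(\theta)$ of order $p/n + \epsilon^2$ and (b) a lower bound on $F_n(\wh\theta)$ in terms of $\Delta$; matching the two forces $\Delta^2 \lesssim p/n + \epsilon^2$.

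For the upper bound at $\eta = \theta$, I would rewrite
\[
E_{\wh{P}_n}\log D + E_{N(\theta, I_p)}\log(1-D) + \log 4 = (E_{\wh{P}_n} - E_{N(\theta, I_p)})\log D + E_{N(\theta, I_p)}\log[4D(1-D)],
\]
and note that $4D(1-D)\leq 1$ kills the second summand. The first summand decomposes as an empirical fluctuation $(E_{\wh{P}_n} - E_{P^*})\log D$ plus a contamination bias $\epsilon(E_Q - E_{N(\theta, I_p)})\log D$. The fluctuation is handled by a Rademacher/Talagrand argument: since $\log\sig$ is $1$-Lipschitz, $|\log D(X)| \leq \log(1+e^\kappa) \lesssim \kappa+1$, and $\mathcal{D}$ is the $\sig$-composition of an $\ell_1(\kappa)$ combination of the single-hidden-unit class $\{x \mapsto \sigma(u^T x + b)\}$ (which has $O(\sqrt{p/n})$ Rademacher complexity in $\mathbb{R}^p$), I obtain $\sup_{D \in \mathcal{D}}|(E_{\wh{P}_n} - E_{P^*})\log D| \lesssim \kappa\sqrt{p/n} + \kappa\sqrt{(p + n\epsilon^2)/n}$ with the claimed probability. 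The bias is bounded by $2\epsilon\|\log D\|_\infty \lesssim \epsilon(\kappa+1)$. Choosing $\kappa \asymp \sqrt{p/n} + \epsilon$ yields $F_n(\theta) \leq C(p/n + \epsilon^2)$.

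For the lower bound at $\eta = \wh\theta$, let $v = (\wh\theta - \theta)/\Delta$ and construct a single-hidden-unit witness $D^\star(x) = \sig(w\,\sigma(v^T x - c)) \in \mathcal{D}$, with threshold $c$ aligned near the midpoint $v^T(\theta+\wh\theta)/2$ (shifted by $\tfrac12$ for the ramp) and weight $w = \min(\Delta, \kappa)$. The key one-dimensional Gaussian calculation is $p^{\theta} - p^{\wh\theta} \asymp \Delta \wedge 1$, where $p^\mu := E_{N(\mu, I_p)}[\sigma(v^T X - c)]$, which holds because $\sigma$ has nontrivial derivative across an $O(1)$ window around the Gaussian median. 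Taylor-expanding
\[
L(w) := E_{N(\theta, I_p)}\log\sig(w\sigma) + E_{N(\wh\theta, I_p)}\log(1-\sig(w\sigma)) + \log 4
\]
around $w=0$ (so that $L(0)=0$, $L'(0) = (p^\theta - p^{\wh\theta})/2$, $L''(0) = O(1)$) gives $L(w) \gtrsim w(\Delta\wedge 1) - O(w^2)$, hence $L(\min(\Delta, \kappa)) \gtrsim \Delta(\Delta \wedge \kappa)$. Subtracting the contamination $\epsilon(E_Q - E_{N(\theta, I_p)})\log D^\star = O(\epsilon w)$ and the empirical deviation $O(\kappa\sqrt{p/n})$ from the previous step (valid since $D^\star \in \mathcal{D}$), I get $F_n(\wh\theta) \gtrsim \Delta(\Delta \wedge \kappa) - C(p/n + \epsilon^2)$. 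Combining with $F_n(\wh\theta) \leq F_n(\theta) \leq C(p/n + \epsilon^2)$ forces $\Delta \lesssim \kappa$ in both sub-cases ($\Delta \leq \kappa$ gives $\Delta^2 \lesssim \kappa^2$; $\Delta > \kappa$ gives $\Delta\kappa \lesssim \kappa^2$), proving the theorem.

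The main obstacle is the uniform concentration in the upper bound at the correct scale $\kappa^2 = p/n + \epsilon^2$: the Rademacher argument must exploit three structural features of $\mathcal{D}$ simultaneously — the $\ell_1$ bound on outer weights (so the effective complexity is independent of the hidden-layer width), the Lipschitz contraction from $D$ to $\log D$ enabled by $D \in [\sig(-\kappa), \sig(\kappa)]$ being bounded away from $\{0,1\}$, and the $O(\sqrt{p/n})$ Rademacher complexity of one affine-then-$\sigma$ unit in $\mathbb{R}^p$. The witness construction in the lower bound is the place where the hidden layer is essential: by Proposition~\ref{prop:local-JS}, a discriminator class with only linear features would reduce $L$ to first-moment matching and give $L(w) \lesssim \epsilon w$, which is too weak to separate $\Delta \asymp \epsilon$ from $\Delta = 0$ and so cannot attain the $\epsilon^2$ rate.
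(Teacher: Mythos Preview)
Your approach is essentially the paper's: bound $F_n(\wh\theta)\le F_n(\theta)$ by splitting each side into population term $+$ empirical fluctuation $+$ contamination bias, use the uniform deviation bound (Lemma~\ref{lem:JS-NN2}) for the fluctuation, and lower-bound the population functional at $\wh\theta$ via a single-hidden-unit witness together with a second-order Taylor expansion in the outer weight. The paper packages this as a chain $F(P_\theta,\wh\theta)\le\cdots\le F(P_\theta,\theta)+O(\kappa\epsilon)+O(\kappa\sqrt{p/n})=O(\kappa^2)$ and takes $w_1=\kappa$ with $b_1=-u^T\theta$, so no case split on $\Delta\lessgtr\kappa$ is needed; your variant with $w=\min(\Delta,\kappa)$ and midpoint centering is an equivalent route.

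One point does need correction. Your bias bound $2\epsilon\|\log D\|_\infty\lesssim\epsilon(\kappa+1)$ is too loose: the additive $1$ produces an $O(\epsilon)$ term, which is \emph{not} $\lesssim p/n+\epsilon^2$ in general, so your stated conclusion $F_n(\theta)\le C(p/n+\epsilon^2)$ does not follow from the displayed bound. The fix (and what the paper uses) is to center: since $\log(2\sig(\cdot))$ is $1$-Lipschitz and vanishes at $0$, we have $|\log(2D(x))|\le\kappa$, hence
\[
\epsilon\bigl|E_Q\log D-E_{N(\theta,I_p)}\log D\bigr|=\epsilon\bigl|E_Q\log(2D)-E_{N(\theta,I_p)}\log(2D)\bigr|\le 2\kappa\epsilon,
\]
which is $O(\kappa^2)$ with $\kappa\asymp\sqrt{p/n}+\epsilon$. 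You implicitly use the correct $O(\epsilon w)$ version of this in the lower-bound step, so the inconsistency is only in the upper bound.
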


Theorem \ref{thm:JS-NN2} verifies our numerical experiments, and shows that the JS-GAN using a neural net discriminator with hidden layers is not only robust, but it also achieves the minimax rate of the problem. The condition $\kappa=O\left(\sqrt{\frac{p}{n}}+\epsilon\right)$ is needed for technical reasons, and the numerical performance does not seem to be affected without it. Figure \ref{fig:rate-1} shows numerical experiments with i.i.d. observations drawn from $(1-\epsilon)N(0_p,I_p)+ \epsilon N(t*1_p,I_p)$ with $\epsilon=0.2$. The magnitude of $t$ characterizes the distance between $N(0_p,I_p)$ and the contamination distribution $N(t*1_p,I_p)$. When $t$ is very small, the contamination barely affects the overall distribution, and we expect a good performance of the estimator. On other hand, when $t$ is very large, it is easy to tell the difference between the contaminated observations and the good ones. Therefore, the hardest case is when $t$ is close to $0$, but not too close, which is verified by the left plot of Figure \ref{fig:rate-1}. The right plot of Figure \ref{fig:rate-1} demonstrates the relation between $\|w\|_1$ and the value of $t$. Note that a larger value of $\|w\|_1$ indicates that it is easier to tell the difference between the data generating process $(1-\epsilon)N(\theta,I_p)+\epsilon Q$ and the distribution we learned, which is $N(\wh{\theta},I_p)$. Therefore, we observe an increasing pattern of $\|w\|_1$ with respect to $t$ in Figure \ref{fig:rate-1}. If we imposed a constraint on $\|w\|_1$ in the optimization, the JS-GAN would have a less distinguishing ability between the data generating process and the estimated model, which would further affect the performance of the estimator when $t$ is very large (the error would not eventually decrease as in the left plot in Figure \ref{fig:rate-1}). In summary, the $\ell_1$ constraint is only needed in the proof to establish the minimax (worst-case) convergence rate, but it is not needed in practice so that the estimator can perform even better than the minimax rate when the contamination distribution is far away from $N(\theta,I_p)$.
\begin{figure}[H]
\includegraphics[width=.48\textwidth]{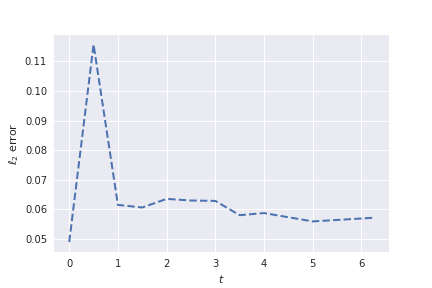}
\includegraphics[width=.48\textwidth]{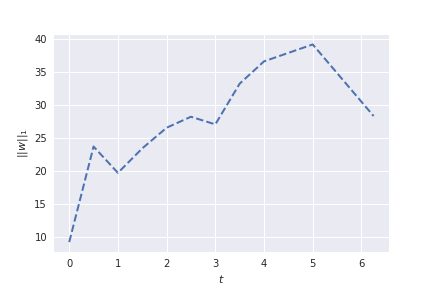}
\caption{Numerical experiments for JS-GAN with $p=100$ and $n=50,000$. Left: $\ell_2$ error with respect to $t$. Right: the $\ell_1$ norm $\|w\|_1$ of the weight matrix in the last layer with respect to $t$. Network structure: 100-20-1.}\label{fig:rate-1}
\end{figure} 

\section{Deep ReLU Networks}\label{sec:deep}

In this section, we investigate the performance of discriminator classes of deep neural nets with the ReLU activation function. 
Since our goal is to learn a $p$-dimensional mean vector, a deep neural network discriminator without any regularization will certainly lead to overfitting. Therefore, it is crucial to design a network class with some appropriate regularizations. Inspired by the work of \cite{bartlett1997valid,bartlett2002rademacher}, we consider a network class with $\ell_1$ regularizations on all layers except for the second last layer with an $\ell_2$ regularization.
With $\mathcal{G}_1^H(B)=\left\{g(x)=\relu(v^Tx): \|v\|_1\leq B\right\}$, a neural network class with $l+1$ layers is defined as
$$\mathcal{G}_{l+1}^H(B)=\left\{g(x)=\relu\left(\sum_{h=1}^Hv_hg_h(x)\right): \sum_{h=1}^H|v_h|\leq B, g_h\in\mathcal{G}_l^H(B)\right\}.$$
Combining with the last sigmoid layer, we obtain the following discriminator class,
\begin{eqnarray*}
{\mathcal{F}}_L^H(\kappa,\tau,B) &=& \Bigg\{D(x)=\sig\left(\sum_{j\geq 1}w_j\sig\left(\sum_{h=1}^{2p}u_{jh}g_{jh}(x)+b_j\right)\right): \\
&&\quad\quad \sum_{j\geq 1}|w_j|\leq\kappa, \sum_{h=1}^{2p}u_{jh}^2\leq 2,|b_j|\leq\tau, g_{jh}\in\mathcal{G}_{L-1}^H(B)\Bigg\}.
\end{eqnarray*}
Note that all the activation functions are $\relu(\cdot)$ except that we use $\sig(\cdot)$ in the last layer of feature map $g(\cdot)$.
A theoretical guarantees of the class defined above is given by the following theorem.
\begin{thm}\label{thm:JS-DNN}
Assume $\frac{p\log p}{n}\vee\epsilon^2\leq c$ for some sufficiently small constant $c>0$.
Consider i.i.d. observations $X_1,...,X_n\sim (1-\epsilon)N(\theta,I_p)+\epsilon Q$ and the estimator $\wh{\theta}$ defined by (\ref{eq:JS-GAN-gen}) with $\mathcal{D}=\mathcal{F}_L^H(\kappa,\tau,B)$ with $H\geq 2p$, $2\leq L=O(1)$, $2\leq B=O(1)$, and $\tau=\sqrt{p\log p}$.
We set $\kappa= O\left(\sqrt{\frac{p\log p}{n}}+\epsilon\right)$. Then, for the estimator $\wh{\theta}$ defined by (\ref{eq:JS-GAN-gen}) with $\mathcal{D}={\mathcal{F}}_L^H(\kappa,\tau,B)$, we have
$$
\|\wh{\theta}-\theta\|^2\leq C\left(\frac{p\log p}{n}\vee\epsilon^2\right),
$$
with probability at least $1-e^{-C'(p\log p+n\epsilon^2)}$ uniformly over all $\theta\in\mathbb{R}^p$ such that $\|\theta\|_{\infty}\leq \sqrt{\log p}$ and all $Q$.
\end{thm}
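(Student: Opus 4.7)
The plan is to mirror the sandwich argument used for Theorem~\ref{thm:JS-NN2}, with the Rademacher step redone for the deep ReLU architecture. Write $L(\eta, D) = E_{\hat P_n}\log D + E_{N(\eta, I_p)}\log(1-D) + \log 4$; the defining minimality of $\hat\theta$ gives $\sup_{D\in\mathcal{F}_L^H} L(\hat\theta, D) \leq \sup_{D\in\mathcal{F}_L^H} L(\theta, D)$. I aim to bound the right side by $O(\kappa^2) = O(p\log p/n\vee\epsilon^2)$ and to lower bound the left side by a quantity $\gtrsim \kappa\|\hat\theta - \theta\| - O(\kappa^2)$; the two together force $\|\hat\theta - \theta\| \lesssim \kappa \asymp \sqrt{p\log p/n} + \epsilon$.

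For the upper bound, decompose $L(\theta, D) \leq |(\hat P_n - P^*)\log D| + \epsilon|(E_Q - E_{P_\theta})\log D| + [E_{P_\theta}\log D + E_{P_\theta}\log(1-D) + \log 4]$ where $P^* = (1-\epsilon)P_\theta + \epsilon Q$, and note the last bracket is pointwise nonpositive since $4D(1-D) \leq 1$. Every $D \in \mathcal{F}_L^H(\kappa, \tau, B)$ has pre-final-sigmoid logit in $[-\kappa, \kappa]$, so both $|\log D + \log 2|$ and $|\log(1-D) + \log 2|$ are uniformly $O(\kappa)$; the contamination term is therefore $O(\epsilon\kappa)$, and a Talagrand-type bound on the uniform deviation contributes $R_n(\mathcal{F}_L^H) + O(\kappa\sqrt{\log(1/\delta)/n})$. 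The Rademacher computation is the crux: two sigmoid contractions (both $1$-Lipschitz) reduce to the inner logit class, the outer $\ell_1$ norm $\leq\kappa$ contributes the factor $\kappa$, and then a Cauchy-Schwarz argument on the $\ell_2$-combined ReLU block $\{\sum_h u_h g_h(x) + b : \|u\|_2^2\leq 2,\ g_h\in\mathcal{G}_{L-1}^H(B)\}$ trades the $\ell_2$ constraint for a factor $\sqrt{H} = \sqrt{2p}$ times $R_n(\mathcal{G}_{L-1}^H(B))$; standard peeling for $\ell_1$-bounded ReLU networks together with the Gaussian maximal inequality (where the hypothesis $\|\theta\|_\infty \leq \sqrt{\log p}$ enters to control $\|X\|_\infty$) yields $R_n(\mathcal{G}_{L-1}^H(B)) \lesssim B^{L-1}\sqrt{\log p/n}$. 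Multiplying out gives $R_n(\mathcal{F}_L^H) \lesssim \kappa\sqrt{p\log p/n}$, whence $\sup_D L(\theta, D) \lesssim \kappa\sqrt{p\log p/n} + \kappa\epsilon \lesssim \kappa^2$.

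For the lower bound, let $\Delta = \|\hat\theta - \theta\|$, $v = (\hat\theta - \theta)/\Delta$ and $c = v^T(\theta + \hat\theta)/2$. Because $H \geq 2p$, represent $v^T x = \sum_i v_i(\relu(x_i) - \relu(-x_i))$: take $g_{j,2i-1}(x) = \relu(x_i)$ and $g_{j,2i}(x) = \relu(-x_i)$ (both in $\mathcal{G}_1^H(B)$ since $\|\pm e_i\|_1 = 1 \leq B$) with $u_{j,2i-1} = v_i$ and $u_{j,2i} = -v_i$, so that $\|u_j\|_2^2 = 2\|v\|_2^2 = 2$. Construct two hidden units $\phi_1(x) = \sig(v^T x - c)$ and $\phi_2(x) = \sig(-v^T x + c) = 1 - \phi_1(x)$, both valid as long as $|c| \leq \tau = \sqrt{p\log p}$, which holds once a mild a priori bound $\|\hat\theta\|_\infty \lesssim \sqrt{\log p}$ is established. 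Setting $w_1 = -w_2 = \kappa/2$ yields $D^*(x) = \sig(\kappa(\phi_1(x) - 1/2)) \in \mathcal{F}_L^H$. A Taylor expansion of $-\log(1 + e^{\pm\kappa(\phi_1 - 1/2)})$ around zero (valid with second-order bound $O(\kappa^2)$ because $\phi_1 \in [0,1]$) produces $L(\hat\theta, D^*) = (\kappa/2)(E_{\hat P_n}\phi_1 - E_{P_{\hat\theta}}\phi_1) - O(\kappa^2)$. Under $P_\theta$ and $P_{\hat\theta}$ the projection $v^T X$ is $N(\mp\Delta/2, 1)$, so $E_{P_\theta}\phi_1 - E_{P_{\hat\theta}}\phi_1 = \int_{-\Delta/2}^{\Delta/2}\mathbb{E}\,\sig'(Z + s)\,ds \gtrsim \Delta$ for bounded $\Delta$; contamination contributes $O(\epsilon)$ since $\phi_1 \in [0,1]$, and a uniform Rademacher bound on the one-dimensional family $\{\sig(v^T x - c)\}$ contributes noise $\lesssim \sqrt{p\log p/n}$. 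Hence $L(\hat\theta, D^*) \gtrsim \kappa\Delta - O(\kappa^2)$ once $\Delta$ exceeds a sufficiently large multiple of $\sqrt{p\log p/n} + \epsilon$, and chaining $\kappa\Delta - O(\kappa^2) \lesssim \kappa^2$ gives $\Delta \lesssim \kappa$.

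I expect two steps to be the main obstacles. The hardest is the Rademacher peeling through the mixed $\ell_2/\ell_1$ architecture: the Cauchy-Schwarz $\sqrt{H}$ factor on the $\ell_2$ middle layer is exactly what produces the extra $\sqrt{\log p}$ over the rate in Theorem~\ref{thm:JS-NN2}, and any careless peeling (e.g.\ converting the $\ell_2$ constraint to $\ell_1$ globally, or handling each ReLU depth naively) would inflate the rate by additional powers of $p$. The secondary but still delicate step is the a priori control $\|\hat\theta\|_\infty \lesssim \sqrt{\log p}$ needed to ensure $|c| \leq \tau$ in the discriminator construction; this can be handled either by restricting the optimization to $\{\eta : \|\eta\|_\infty \leq C\sqrt{\log p}\}$ or by a dichotomy argument observing that any coordinate of $\hat\theta$ deviating substantially from $\theta$ exposes a hidden unit with saturated (hence large) value of $L(\hat\theta,\cdot)$, contradicting the established upper bound.
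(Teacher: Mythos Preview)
Your overall architecture---sandwich inequality, Rademacher peeling through the layered network, then a hand-built discriminator to extract a linear lower bound in $\|\wh\theta-\theta\|$---is exactly the paper's strategy, and your Rademacher sketch (contract through the two sigmoids, convert the $\ell_2$ constraint $\|u\|^2\le 2$ over $2p$ coordinates to an $\ell_1$ bound via Cauchy--Schwarz, then peel the $\ell_1$-ReLU layers) is essentially Lemma~\ref{lem:JS-DNN-sig}. The two substantive differences are worth noting.

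First, the ``secondary but delicate'' obstacle you flag is self-inflicted. The paper centers the discriminator at the \emph{true} $\theta$, taking $b_1=-\wt u^T\theta$ rather than the midpoint $c=v^T(\theta+\wh\theta)/2$. Then $|b_1|\le\|\theta\|\le\sqrt{p}\,\|\theta\|_\infty\le\sqrt{p\log p}=\tau$ follows immediately from the hypothesis $\|\theta\|_\infty\le\sqrt{\log p}$, and no a priori control of $\wh\theta$ is ever needed. With this choice and a single outer weight $w_1=\kappa$ (no second unit $\phi_2$ is required), one lands directly on $f_{u^T(\wh\theta-\theta)}(\kappa)$ of the form (\ref{eq:def-f-delta-t}) with $\sigma=\sig$, and the rest is identical to the Theorem~\ref{thm:JS-NN2} endgame. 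Your midpoint construction works but creates circularity you then have to argue your way out of.

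Second, two smaller gaps. You place $g_{jh}(x)=\relu(\pm x_i)$ in $\mathcal G_1^H(B)$, but the class requires $g_{jh}\in\mathcal G_{L-1}^H(B)$; when $L>2$ you need the short induction $\relu(\relu(x_h)-\relu(-x_h))=\relu(x_h)$ (and similarly for $-x_h$), which is where the assumption $B\ge 2$ is used. Separately, invoking the Gaussian maximal inequality to bound $R_n(\mathcal G_{L-1}^H(B))$ presumes the data are $N(\theta,I_p)$, but your empirical process is under the mixture $P^*$, where $\|X\|_\infty$ is uncontrolled. The clean fix is to condition on the good/bad split of the sample (Gaussian vs.\ $Q$), apply the Rademacher bound only to the $n_1$ Gaussian observations, and absorb the remaining $n_2/n\lesssim\epsilon$ fraction into the contamination term via the uniform bound $|\log(2D)|\le\kappa$.
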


The theorem shows that JS-GAN with a deep ReLU network can achieve the error rate $\frac{p\log p}{n}\vee\epsilon^2$ with respect to the squared $\ell_2$ loss. The condition $\|\theta\|_{\infty}\leq\sqrt{\log p}$ for the ReLU network can be easily satisfied with a simple preprocessing step. We split the data into two halves, whose sizes are $\log n$ and $n-\log n$, respectively. Then, we calculate the coordinatewise median $\wt{\theta}$ using the small half. It is easy to show that $\|\wt{\theta}-\theta\|_{\infty}\leq \sqrt{\frac{\log p}{\log n}}\vee\epsilon$ with high probability. Then, for each $X_i$ from the second half, the conditional distribution of $X_i-\wt{\theta}$ given the first half is $(1-\epsilon)N(\theta-\wt{\theta},I_p)+\epsilon \wt{Q}$. Since $\sqrt{\frac{\log p}{\log n}}\vee\epsilon\leq \sqrt{\log p}$, the condition $\|\theta-\wt{\theta}\|_{\infty}\leq\sqrt{\log p}$ is satisfied, and thus we can apply the estimator (\ref{eq:JS-GAN-gen}) using the shifted data $X_i-\wt{\theta}$ from the second half. The theoretical guarantee of Theorem \ref{thm:JS-DNN} will be
$$\|\wh{\theta}-(\theta-\wt{\theta})\|^2\leq C\left(\frac{p\log p}{n}\vee\epsilon^2\right),$$
with high probability. Hence, we can use $\wh{\theta}+\wt{\theta}$ as the final estimator to achieve the same rate in Theorem \ref{thm:JS-DNN}.

On the other hand, our experiments show that this preprocessing step is not needed. We believe that the assumption $\|\theta\|_{\infty}\leq\sqrt{\log p}$ is a technical artifact in the analysis of the Rademacher complexity. It can probably be dropped by a more careful analysis.

\section{Elliptical Distributions} \label{sec:ellip}

An advantage of Tukey's median (\ref{eq:Tukey}) is that it leads to optimal robust location estimation under general elliptical distributions including Cauchy distribution whose mean does not exist. In this section, we show that JS-GAN shares the same property.
A random vector $X\in\mathbb{R}^p$ follows an elliptical distribution if it admits a representation
$$X=\theta+\xi AU,$$
where $U$ is uniformly distributed on the unit sphere $\{u\in\mathbb{R}^p:\|u\|=1\}$ and $\xi\geq 0$ is a random variable independent of $U$ that determines the shape of the elliptical distribution \citep{fang2017symmetric}. The center and the scatter matrix are $\theta$ and $\Sigma=AA^T$.

For a unit vector $v$, let the density function of $\xi v^TU$ be $h$. Note that $h$ is independent of $v$ because of the symmetry of $U$. Then, there is a one-to-one relation between the distribution of $\xi$ and $h$, and thus the triplet $(\theta,\Sigma,h)$ fully parametrizes an elliptical distribution.

Note that $h$ and $\Sigma=AA^T$ are not identifiable, because $\xi A=(c\xi)(c^{-1}A)$ for any $c>0$. Therefore, without loss of generality, we can restrict $h$ to be a member of the following class
$$\mathcal{H}=\left\{h:h(t)=h(-t),h\geq 0, \int h=1, \int \sigma(t)(1-\sigma(t))h(t)dt=1\right\}.$$
This makes the parametrization $(\theta,\Sigma,h)$ of an elliptical distribution fully identifiable, and we use $EC(\theta,\Sigma,h)$ to denote an elliptical distribution parametrized in this way.

The JS-GAN estimator is defined as
\begin{equation}
(\wh{\theta},\wh{\Sigma},\wh{h})=\arginf_{\eta\in\mathbb{R}^p,\Gamma\in\mathcal{E}_p(M),g\in\mathcal{H}}\sup_{D\in\mathcal{D}}\left[\frac{1}{n}\sum_{i=1}^n\log D(X_i)+E_{EC(\eta,\Gamma,g)}\log(1-D(X))\right]+\log 4,\label{eq:JS-GAN-EC}
\end{equation}
where $\mathcal{E}_p(M)$ is the set of all positive semi-definite matrix with spectral norm bounded by $M$.

\begin{thm}\label{thm:elliptical}
Consider the estimator $\wh{\theta}$ defined above with $\mathcal{D}$ specified by (\ref{eq:JS-NN2}). Assume $M=O(1)$, $\frac{p}{n}+\epsilon^2\leq c$ for some sufficiently small constant $c>0$, and set $\kappa=O\left(\sqrt{\frac{p}{n}}+\epsilon\right)$. With i.i.d. observations $X_1,...,X_n\sim (1-\epsilon)EC(\theta,\Sigma,h)+\epsilon Q$, we have
$$\|\wh{\theta}-\theta\|^2\leq C\left(\frac{p}{n}\vee\epsilon^2\right),$$
with probability at least $1-e^{-C'(p+n\epsilon^2)}$ uniformly over all $\theta\in\mathbb{R}^p$, $\Sigma\in\mathcal{E}_p(M)$ and all $Q$. The constants $C,C'>0$ are universal.
\end{thm}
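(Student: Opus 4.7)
The plan is to extend the pipeline used for Theorem~\ref{thm:JS-NN2} to elliptical laws, with Gaussian-density calculations replaced by arguments that only use the radial symmetry of $EC(\theta,\Sigma,h)$ together with the scale normalization built into the definition of $\mathcal{H}$. Write $P^\star=(1-\epsilon)EC(\theta,\Sigma,h)+\epsilon Q$ and denote
\[
\JS_{\mathcal D}(P,R)=\sup_{D\in\mathcal D}\bigl[E_P\log D(X)+E_R\log(1-D(X))\bigr]+\log 4
\]
for the restricted Jensen-Shannon divergence over the class in (\ref{eq:JS-NN2}). The argument decomposes into four stages: (i) uniform concentration of the empirical objective over the generator class, (ii) the optimization basic inequality, (iii) a triangle-type step that yields a bound between two purely elliptical laws, and (iv) a lower bound on $\JS_{\mathcal D}$ between two elliptical laws in terms of the location difference.

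For (i), every $D\in\mathcal D$ takes values in $[\sig(-\kappa),\sig(\kappa)]$, so $\log D$ and $\log(1-D)$ are uniformly bounded and $O(1)$-Lipschitz in the argument $\sum_j w_j\sigma(u_j^T x+b_j)$ on that range. The contraction principle together with the standard Rademacher bound for one-hidden-layer $\ell_1$-weight-bounded networks with bounded activation gives a Rademacher complexity of order $\kappa\sqrt{p/n}$, and a covering argument handles the generator side (which only enters through the population expectation), yielding
\[
\sup_{\eta,\Gamma,g}\bigl|\JS_{\mathcal D}(P_n,EC(\eta,\Gamma,g))-\JS_{\mathcal D}(P^\star,EC(\eta,\Gamma,g))\bigr|\;\lesssim\;\kappa\sqrt{p/n}
\]
with probability $1-e^{-c(p+n\epsilon^2)}$. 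For (ii), the minimality of $(\wh\theta,\wh\Sigma,\wh h)$ against the truth plus $\|P^\star-EC(\theta,\Sigma,h)\|_{\mathrm{TV}}\le\epsilon$, combined with the elementary Lipschitz bound $|\JS_{\mathcal D}(P,R)-\JS_{\mathcal D}(Q,R)|\lesssim\kappa\,\|P-Q\|_{\mathrm{TV}}$ (from $\log D(x)=-\log 2+O(\kappa)$ over the class), gives
\[
\JS_{\mathcal D}\bigl(P^\star,\,EC(\wh\theta,\wh\Sigma,\wh h)\bigr)\;\lesssim\;\kappa\bigl(\sqrt{p/n}+\epsilon\bigr).
\]
Applying the same Lipschitz property once more, stage (iii) passes to
\[
\JS_{\mathcal D}\bigl(EC(\theta,\Sigma,h),\,EC(\wh\theta,\wh\Sigma,\wh h)\bigr)\;\lesssim\;\kappa\bigl(\sqrt{p/n}+\epsilon\bigr).
\]

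The main obstacle is stage (iv): proving, uniformly over $\Sigma_j\in\mathcal E_p(M)$ and $h_j\in\mathcal H$, the lower bound
\[
\JS_{\mathcal D}\bigl(EC(\theta_1,\Sigma_1,h_1),\,EC(\theta_2,\Sigma_2,h_2)\bigr)\;\gtrsim\;\kappa\,\|\theta_1-\theta_2\|\,\wedge\,\|\theta_1-\theta_2\|^2.
\]
To obtain this, choose $u=(\theta_1-\theta_2)/\|\theta_1-\theta_2\|$, the midpoint shift $b=-u^T(\theta_1+\theta_2)/2$, and test the one-neuron discriminator $D_w(x)=\sig\bigl(w\,\sigma(u^Tx+b)\bigr)$ with the sigmoid choice of $\sigma$. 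Taylor expanding the objective
\[
F(w)=E_{P_1}\log D_w(X)+E_{P_2}\log(1-D_w(X))+\log 4
\]
at $w=0$ gives $F(w)=(w/2)\Delta+O(w^2)$ with $\Delta=E_{P_1}\sigma(u^TX+b)-E_{P_2}\sigma(u^TX+b)$. Using the radial symmetry of $u^TX-u^T\theta_j$ under $P_j$ and the scale control from $\Sigma_j\preceq MI$ and the normalization on $h_j\in\mathcal H$, a direct computation shows $|\Delta|\gtrsim\|\theta_1-\theta_2\|$ for small $\|\theta_1-\theta_2\|$ and $|\Delta|\gtrsim 1$ otherwise. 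Optimizing over $|w|\le\kappa$ then gives the stated quantitative lower bound. Combining (iii) and (iv) with $\kappa\asymp\sqrt{p/n}+\epsilon$ and solving for $\|\wh\theta-\theta\|^2$ produces the claimed rate $p/n\vee\epsilon^2$. The hardest technical step is exactly verifying the uniform lower bound on $|\Delta|$ across all admissible $(h_j,\Sigma_j)$, since neither moment nor density-smoothness is assumed for $h_j$; this is where the otherwise unusual scale normalization in the definition of $\mathcal H$ is essential, ensuring that projection onto $u$ cannot produce a degenerate one-dimensional law.
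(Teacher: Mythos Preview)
Your proposal follows the same overall scaffold as the paper: stages (i)--(iii) are essentially identical, and stage (iv) is carried out by plugging in a single-neuron test discriminator and Taylor expanding in the outer weight. The difference is entirely in which single-neuron discriminator you use.

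You take the unit direction $u=(\theta_1-\theta_2)/\|\theta_1-\theta_2\|$ with midpoint centering $b=-u^T(\theta_1+\theta_2)/2$. With this choice the first-order coefficient becomes
\[
\Delta \;=\; \int \sigma\Bigl(\tfrac{d}{2}+a_1 s\Bigr)h_1(s)\,ds \;-\; \int \sigma\Bigl(-\tfrac{d}{2}+a_2 s\Bigr)h_2(s)\,ds,
\qquad a_j=\sqrt{u^T\Sigma_j u},
\]
and the derivative at $d=0$ is $\tfrac12\sum_j\int\sigma'(a_j s)h_j(s)\,ds$. Since $a_j\in[0,\sqrt{M}]$ rather than $a_j=1$, the normalization $\int\sigma'(s)h_j(s)\,ds=\text{const}$ does not apply directly; you still need the extra uniformity argument you flag at the end (showing $\int\sigma'(a s)h(s)\,ds$ is bounded below over $a\le\sqrt{M}$, $h\in\mathcal H$). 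This is doable using the normalization to bound the mass of $h$ on bounded intervals, but it is genuine extra work.

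The paper sidesteps this by a more targeted choice: it takes $u_1=u/\sqrt{u^T\wh\Sigma u}$ and $b_1=-u^T\theta/\sqrt{u^T\wh\Sigma u}$, centering at the \emph{true} $\theta$ and rescaling so that under $EC(\wh\theta,\wh\Sigma,\wh h)$ the projection has exactly density $\wh h$ (no scale factor). Then the relevant function is $H(\delta)=\int\sigma(\delta+s)\wh h(s)\,ds$, whose derivative at zero is $H'(0)=\int\sigma(s)(1-\sigma(s))\wh h(s)\,ds$, which is exactly the normalization constant in $\mathcal H$. The first term $\int\sigma(\Delta s)h(s)\,ds$ collapses to $1/2$ by symmetry regardless of the unknown scale ratio $\Delta$. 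The bound on $\|\wh\theta-\theta\|$ then follows after multiplying back by $\sqrt{u^T\wh\Sigma u}\le\sqrt{M}$. So the paper's trick is precisely to push the unknown scatter $\wh\Sigma$ into the discriminator weights (which are unconstrained in (\ref{eq:JS-NN2})) so that the $\mathcal H$-normalization is used verbatim rather than up to a scale perturbation. Your route is correct but leaves more to check; the paper's route makes the ``hardest technical step'' you identify disappear.
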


Note that Theorem \ref{thm:elliptical} guarantees the same convergence rate as in the Gaussian case for all elliptical distributions. This even includes multivariate Cauchy where mean does not exist. Therefore, the location estimator (\ref{eq:JS-GAN-EC}) is fundamentally different from \cite{diakonikolas2016robust,lai2016agnostic}, which is only designed for robust mean estimation.

To achieve rate-optimality for robust location estimation under general elliptical distributions, the estimator (\ref{eq:JS-GAN-EC}) is different from (\ref{eq:JS-GAN-gen}) only in the generator class. They share the same discriminator class (\ref{eq:JS-NN2}). This underlines an important principle for designing GAN estimators: the overall statistical complexity of the estimator is only determined by the discriminator class.

The estimator (\ref{eq:JS-GAN-EC}) also outputs $(\wh{\Sigma},\wh{h})$, but we do not claim any theoretical property for $(\wh{\Sigma},\wh{h})$ in this paper.

\section{Numerical Experiments} \label{sec:num}

In this section, we give extensive numerical studies of robust mean estimation via GAN. After introducing the implementation details in Section \ref{sec:imp}, we verify our theoretical results on minimax estimation with both TV-GAN and JS-GAN in Section \ref{sec:rate}. Comparison with other methods on robust mean estimation in the literature is given in Section \ref{sect:comp}. The effects of various network structures are studied in Section \ref{sec:n-structure}. Finally, adaptation to unknown covariance structure and elliptical distributions are investigated in Section \ref{sec:cov} and Section \ref{sec:last}.

\subsection{Implementations}\label{sec:imp}
The implementation for JS-GAN is given in Algorithm \ref{alg:jsgan}, and a simple modification of the objective function leads to that of TV-GAN. A PyTorch implementation is available at \url{https://github.com/zhuwzh/Robust-GAN-Center} or \url{https://github.com/yao-lab/Robust-GAN-Center}.
Several important implementation details are listed below.
\begin{algorithm}
\caption{JS-GAN: $\arginf_{\eta}\sup_{w}[\frac{1}{n}\sum_{i=1}^n \log D_{w}(X_i) + \mathbb{E} \log(1-D_{w}(G_{\eta}(Z)))]$}\label{alg:jsgan}
\textbf{Input}: Observation set $S=\{X_{1},\ldots,X_{n}\}\in\mathbb{R}^{p}$, discriminator network $D_{w}(x)$, generator network $G_{\eta}(z)=z+\eta$, learning rates $\gamma_{d}$ and $\gamma_{g}$ for the discriminator and the generator, batch size $m$, discriminator steps in each iteration $K$, total epochs $T$, average epochs $T_0$.\\
\textbf{Initialization}: Initialize $\eta$ with coordinatewise median of $S$. Initialize $w$ with $N(0, .05)$ independently on each element or Xavier \citep{glorot2010understanding}.
\begin{algorithmic}[1]
	\For{\texttt{$t=1,\ldots,T$}}
		\For{\texttt{$k=1,\ldots,K$}}
			\State Sample mini-batch $\{X_{1},\ldots,X_{m}\}$ from $S$. Sample $\{Z_{1},\ldots,Z_{m}\}$ from $N(0,I_{p})$
			\State $g_{w}\gets\nabla_{w}[\frac{1}{m}\Sigma_{i=1}^{m} \log D_{w}(X_{i}) + \frac{1}{m}\Sigma_{i=1}^{m} \log (1 - D_{w}(G_{\eta}(Z_{i})))]$ 
			\State $w \gets w + \gamma_{d}g_{w}$
		\EndFor
		\State Sample $\{Z_{1},\ldots,Z_{m}\}$ from $N(0,I_{p})$
		\State $g_{\eta}\gets\nabla_{\eta}[\frac{1}{m}\Sigma_{i=1}^{m} \log(1-D_{w}(G_{\eta}(Z_{i})))]$
		\State $\eta \gets \eta - \gamma_{g}g_{\eta}$
	\EndFor
\end{algorithmic}
\textbf{Return}: The average estimate $\eta$ over the last $T_0$ epochs.
\end{algorithm}

\begin{itemize}
\item \textit{How to tune parameters?} The choice of learning rates is crucial to the convergence rate, but the minimax game is hard to evaluate. We propose a simple strategy to tune hyper-parameters including the learning rates. Suppose we have estimators $\wh{\theta}_{1},\ldots, \wh{\theta}_{M}$ with corresponding discriminator networks $D_{\wh{w}_{1}}$,\ldots, $D_{\wh{w}_{M}}$. Fixing $\eta=\wh{\theta}$, we further apply gradient descent to $D_{w}$ with a few more epochs (but not many in order to prevent overfitting, for example 10 epochs) and select the $\wh{\theta}$ with the smallest value of the objective function (\ref{eq:JS-GAN-gen}) (JS-GAN) or (\ref{eq:TV-GAN-NN1}) (TV-GAN). We note that training discriminator and generator alternatively usually will not suffer from overfitting since the objective function for either the discriminator or the generator is always changing. However, we must be careful about the overfitting issue when training the discriminator alone with a fixed $\eta$, and that is why we apply an early stopping strategy here. Fortunately, the experiments show that if the structures of networks are same (then of course, the dimensions of the inputs are same), the choices of hyper-parameters are robust to different models.

\item \textit{When to stop training?} Judging convergence is a difficult task in GAN trainings, since sometimes oscillation may occur. In computer vision, people often use a task related measure and stop training once the requirement based on the measure is achieved. In our experiments below, we simply use a sufficiently large $T$ (see below), which works well in practice. It is interesting to explore an efficient early stopping rule in the future work.
\item \textit{How to design the network structure?} Although Theorem \ref{thm:TV-NN1} and Theorem \ref{thm:JS-NN2} guarantee the minimax rates of TV-GAN without hidden layer and JS-GAN with one hidden layer, one may wonder whether deeper network structures will perform better. From our experiments, TV-GAN with one hidden layer is better than TV-GAN without any hidden layer. Moreover, JS-GAN with deep network structures can significantly improve over shallow networks especially when the dimension is large (e.g. $p\geq 200$). For a network with one hidden layer, the choice of width may depend on the sample size. If we only have 5,000 samples of 100 dimensions, two hidden units performs better than five hidden units, which performs better than twenty hidden units. If we have 50,000 samples, networks with twenty hidden units perform the best.
\item \textit{How to stabilize and accelerate TV-GAN?} As we have discussed in Section \ref{sec:TV-GAN}, TV-GAN has a bad landscape when $N(\theta,I_p)$ and the contamination distribution $Q$ are linearly separable (see Figure \ref{fig:tv_landscape_10}). An outlier removal step before training TV-GAN may be helpful. Besides, spectral normalization \citep{miyato2018spectral} is also worth trying since it can prevent the weight from going to infinity and thus can increase the chance to escape from bad saddle points. To accelerate the optimization of TV-GAN, in all the numerical experiments below, we adopt a regularized version of TV-GAN inspired by Proposition \ref{prop:local-JS}. Since a good feature extractor should match nonlinear moments of $P=(1-\epsilon)N(\theta,I_p) + \epsilon Q$ and $N(\eta, I_{p})$, we use an additional regularization term that can accelerate training and sometimes even leads to better performances. Specifically, let $D(x)=\sig(w^T\Phi(x))$ be the discriminator network with $w$ being the weights of the output layer and $\Phi_D(x)$ be the corresponding network after removing the output layer from $D(x)$. The quantity $\Phi_D(x)$ is usually viewed as a feature extractor, which naturally leads to the following regularization term \citep{salimans2016improved,mroueh2017mcgan}, defined as
\begin{equation}\label{eq:alg-reg}
r(D,\eta) = \left\|T(\Phi_D, \mathbb{P}_n) - T(\Phi_D, N(\eta, I_{p}))\right\|^{2},
\end{equation}
where $\mathbb{P}_n=(1/n)\sum_{i=1}^n\delta_{X_i}$ is the empirical distribution, and $T(\Phi, P)$ can be either moment matching $T(\Phi, P)=\mathbb{E}_{P}\Phi(X)$, or median matching $T(\Phi, P)=\textnormal{Median}_{X\sim P}\Phi_D(X)$.
\end{itemize}

\subsection{Numerical Supports for the Minimax Rates}\label{sec:rate}

In this section,
we verify the minimax rates achieved by TV-GAN (Theorem \ref{thm:TV-NN1}) and JS-GAN (Theorem \ref{thm:JS-NN2}) via numerical experiments. The TV-GAN has no hidden layer, while the JS-GAN has one hidden layer with five hidden units in our experiments. All activation functions are sigmoid. Two main scenarios we consider here are $\sqrt{p/n} < \epsilon$ and $\sqrt{p/n}>\epsilon$, where in both cases, various types of contamination distributions $Q$, are considered. 

We introduce the contamination distributions $Q$ used in the experiments. We first consider $Q=N(\mu,I_p)$ with $\mu$ ranges in $\{0.2,0.5,1,5\}$. Note that the total variation distance between $N(0_p,I_p)$ and $N(\mu,I_p)$ is of order $\|0_p-\mu\|=\|\mu\|$. We hope to use different levels of $\|\mu\|$ to test the algorithm and verify the error rate in the worst case. Second, we consider $Q=N(1.5*1_p, \Sigma)$ to be a Gaussian distribution with a non-trivial covariance matrix $\Sigma$. The covariance matrix is generated according to the following steps. First generate a sparse precision matrix $\Gamma=(\gamma_{ij})$ with each entry $\gamma_{ij} = z_{ij}*\tau_{ij}, i\leq j$, where $z_{ij}$ and $\tau_{ij}$ are independently generated from Uniform$(0.4, 0.8)$ and Bernoulli$(0.1)$. We then define $\gamma_{ij}=\gamma_{ji}$ for all $i>j$ and $\bar{\Gamma}=\Gamma + (|\min \textnormal{eig}(\Gamma)| + 0.05)I_p$ to make the precision matrix symmetric and positive definite, where $\min\textnormal{eig}(\Gamma)$ is the smallest eigenvalue of $\Gamma$. The covariance matrix is $\Sigma=\bar{\Gamma}^{-1}$. Finally, we consider $Q$ to be a Cauchy distribution with independent component, and the $j$th component takes a standard Cauchy distribution with location parameter $\tau_j=0.5$.

Tables \ref{tab:rate-eps}-\ref{tab:rate-n} show experiment results with i.i.d. samples drawn from $(1-\epsilon)N(0_p,I_p)+\epsilon Q$.
The first scenario we consider is when $\epsilon$ dominates $\sqrt{p/n}$, and we expect the worse-case $\ell_2$ loss $\|\wh{\theta}-\theta\|$ is approximately linear with respect to $\epsilon$. Table \ref{tab:rate-eps} shows the performance of both JS-GAN and TV-GAN. To visualize the verification of the minimax rate, we take the maximum error among all choices of $Q$ in Table \ref{tab:rate-eps}, and plot the worst-case errors in Figure \ref{fig:rate}. Similar experiments are conducted for the second scenario when $\sqrt{p/n}$ dominates $\epsilon$. Table \ref{tab:rate-p} and Table \ref{tab:rate-n} show experiment results with a fixed $n$ and a fixed $p$, respectively. Again, the worst-case errors among all $Q$'s considered are plotted in Figure \ref{fig:rate}.
\begin{table}[H]
\centering
   \resizebox{\linewidth}{!}{%
   \begin{tabular}{|c|c|c|c|c|c|}
   	\hline
	$Q$ & Net & $\epsilon=0.05$ & $\epsilon=0.10$ & $\epsilon=0.15$ & $\epsilon=0.20$\\
   	\hline \hline
	\multirow{2}{*}{$N(0.2*1_p,I_p)$} & JS & 0.1025 (0.0080) & 0.1813 (0.0122) & \textbf{0.2632 (0.0080)} & \textbf{0.3280 (0.0069)} \\
		& TV & 0.1110 (0.0204) & 0.2047 (0.0112) & 0.2769 (0.0315) & 0.3283 (0.0745) \\
	\hline
	\multirow{2}{*}{$N(0.5*1_p, I_p)$} & JS & 0.1407 (0.0061) & 0.1895 (0.0070) & 0.1714 (0.0502) & 0.1227 (0.0249)\\
		& TV & 0.2003 (0.0480) & 0.2065 (0.1495) & 0.2088 (0.0100) & 0.3985 (0.0112)\\
	\hline
	\multirow{2}{*}{$N(1_p, I_p)$} & JS & 0.0855 (0.0054) & 0.1055 (0.0322) & 0.0602 (0.0133) & 0.0577 (0.0029) \\
	& TV  & 0.1084 (0.0063) & 0.0842 (0.0036) & 0.3228 (0.0123) & 0.1329 (0.0125)\\
	\hline
	\multirow{2}{*}{$N(5*1_p, I_p)$} & JS & 0.0587 (0.0033) & 0.0636 (0.0025) & 0.0625 (0.0045) & 0.0591 (0.0040) \\
	& TV & \textsl{1.2886 (0.5292)} & \textsl{4.4511 (0.8754)} & \textsl{7.3868 (0.8081)} & \textsl{10.5724 (1.2605)}\\
	\hline
	\multirow{2}{*}{Cauchy$(0.5*1_p)$} & JS & 0.0625 (0.0045) & 0.0652 (0.0044) & 0.0648 (0.0035) & 0.0687 (0.0042)\\
	& TV & 0.2280 (0.0067) & 0.3842 (0.0083) & 0.5740 (0.0071) & \textbf{0.7768 (0.0074)}\\
	\hline
	\multirow{2}{*}{$N(0.5*1_p,\Sigma)$} & JS & \textbf{0.1490 (0.0061)} & \textbf{0.1958 (0.0074)} & 0.2379 (0.0076) & 0.1973 (0.0679) \\
	& TV & \textbf{0.2597 (0.0090)} & \textbf{0.4621 (0.0649)} & \textbf{0.6344 (0.0905)} & 0.7444 (0.3115) \\
	\hline
   \end{tabular}}
\caption{Scenario I: $\sqrt{p/n}<\epsilon$. Setting: $p=100, n=50,000$, and $\epsilon$ from $0.05$ to $0.20.$ Network structure of JS-GAN: one hidden layer with 5 hidden units. Network structure of TV-GAN: zero-hidden layer. The number in each cell is the average of $\ell_2$ loss $\|\wh{\theta}-\theta\|$ with standard deviation in parenthesis from 10 repeated experiments. The bold character marks the worst case among our choices of $Q$ at each $\epsilon$ level. The results of TV-GAN for $Q=N(5*1_p,I_{p})$ are highlighted in slanted font. The failure of training in this case is due to the bad landscape when $N(0_p,I_p)$ and $Q$ are linearly separable, as discussed in Section \ref{sec:TV-GAN} (see Figure \ref{fig:tv_landscape_10}).}\label{tab:rate-eps}
\end{table}

\begin{table}[H]
\centering
  	 \resizebox{\linewidth}{!}{%
	 \begin{tabular}{|c|c|c|c|c|c|c|}
	\hline
   	$Q$ & Net & $p=10$ & $p=25$ & $p=50$ & $p=75$ & $p=100$\\
   	\hline\hline
	\multirow{2}{*}{$N(0.2*1_p,I_p)$} & JS & 0.1078 (0.0338) & 0.1819 (0.0215) & 0.3355 (0.0470) & 0.4806 (0.0497) & 0.5310 (0.0414)\\
		&TV & 0.2828 (0.0580) & 0.4740 (0.1181) & 0.5627 (0.0894) & 0.8217 (0.0382) & 0.8090 (0.0457) \\
	\hline
	\multirow{2}{*}{$N(0.5*1_p, I_p)$} & JS & 0.1587 (0.0438) & 0.2684 (0.0386) & \textbf{0.4213 (0.0356)} & \textbf{0.5355 (0.0634)} & 0.6825 (0.0981)\\
	& TV & 0.2864 (0.0521) & 0.5024 (0.1038) & 0.6878 (0.1146) & 0.9204 (0.0589) & 0.9418 (0.0551) \\
	\hline
	\multirow{2}{*}{$N(1_p, I_p)$} & JS & 0.1644 (0.0255) & 0.2177 (0.0480) & 0.3505 (0.0552) & 0.4740 (0.0742) & 0.6662 (0.0611) \\
	& TV & \textbf{0.3733 (0.0878)} & \textbf{0.5407 (0.0634)} & \textbf{0.9061 (0.1029)} & \textbf{1.0672 (0.0629)} & \textbf{1.1150 (0.0942)} \\
	\hline
	\multirow{2}{*}{$N(5*1_p, I_p)$} & JS & 0.0938 (0.0195) & 0.2058 (0.0218) & 0.3316 (0.0462) & 0.4054 (0.0690) & 0.5553 (0.0518) \\
	& TV & 0.3707 (0.2102) & 0.7434 (0.3313) & 1.1532 (0.3488) & 1.1850 (0.3739) & 1.3257 (0.1721) \\
	\hline
	\multirow{2}{*}{Cauchy$(0.5*1_p)$} & JS & 0.1188 (0.0263) & 0.1855 (0.0282) & 0.2967 (0.0284) & 0.4094 (0.0385) & 0.4826 (0.0479) \\
	& TV & 0.3198 (0.1543) & 0.5205 (0.1049) & 0.6240 (0.0652) & 0.7536 (0.0673) & 0.7612 (0.0613) \\
	\hline
	\multirow{2}{*}{$N(0.5*1_p,\Sigma)$} & JS & \textbf{0.1805 (0.0220)} & \textbf{0.2692 (0.0318)} & 0.3885 (0.0339) & 0.5144 (0.0547) & \textbf{0.6833 (0.1094)} \\
	& TV & 0.3036 (0.0736) & 0.5152 (0.0707) & 0.7305 (0.0966) & 0.9460 (0.0900) & 1.0888 (0.0863) \\
   	\hline
   \end{tabular}}
\caption{Scenario II-a: $\sqrt{p/n}>\epsilon$. Setting: $n=1,000, \epsilon=0.1$, and $p$ from $10$ to $100$.  Other details are the same as above. The bold character marks the worst case among our choices of $Q$ at each  level of $p$.}\label{tab:rate-p}
\end{table}

\begin{table}[H]
\centering
   \resizebox{\linewidth}{!}{%
   \begin{tabular}{|c|c|c|c|c|c|c|}
   \hline
   	$Q$ & Net & $n=50$ & $n=100$ & $n=200$ & $n=500$ & $n=1000$\\
   	\hline \hline
	\multirow{2}{*}{$N(0.2*1_p,I_p)$} & JS & 1.3934 (0.5692) & 1.0055 (0.1040) & 0.8373 (0.1335) & 0.4781 (0.0677) & 0.3213 (0.0401)  \\
		&TV & 1.9714 (0.1552) & 1.2629 (0.0882) & 0.7579 (0.0486) & 0.6640 (0.0689) & 0.6348 (0.0547)\\
	\hline
	\multirow{2}{*}{$N(0.5*1_p, I_p)$} & JS & 1.6422 (0.6822) & 1.2101 (0.2826) & 0.8374 (0.1021) & \textbf{0.5832 (0.0595)} & 0.3930 (0.0485)\\
	& TV & 1.9780 (0.2157) & 1.2485 (0.0668) & 0.8198 (0.0778) & 0.7597 (0.0456) & 0.7346 (0.0750) \\
	\hline
	\multirow{2}{*}{$N(1_p, I_p)$} & JS & 1.8427 (0.9633) & 1.2179 (0.2782) & \textbf{1.0147 (0.2170)} & 0.5586 (0.1013) & 0.3639 (0.0464) \\
	& TV & 1.9907 (0.1498) & \textbf{1.4575 (0.1270)} & \textbf{0.9724 (0.0802)} & \textbf{0.9050 (0.1479)} & \textbf{0.8747 (0.0757)} \\
	\hline
	\multirow{2}{*}{$N(5*1_p, I_p)$} & JS & \textbf{2.6392 (1.3877)} & \textbf{1.3966 (0.5370)} & 0.9633 (0.1383) & 0.5360 (0.0808) & 0.3265 (0.0336)  \\
	& TV & 2.1050 (0.3763) & 1.5205 (0.2221) & 1.1909 (0.2273) & 1.0957 (0.1390) & 1.0695 (0.2639) \\
	\hline
	\multirow{2}{*}{Cauchy$(0.5*1_p)$} & JS & 1.6563 (0.5246) & 1.0857 (0.3613) & 0.8944 (0.1759) & 0.5363 (0.0593) & 0.3832 (0.0408)  \\
	& TV & \textbf{2.1031 (0.2300)} & 1.1712 (0.1493) & 0.6904 (0.0763) & 0.6300 (0.0642) & 0.5085 (0.0662) \\
	\hline
	\multirow{2}{*}{$N(0.5*1_p,\Sigma)$} & JS &  1.2296 (0.3157) & 0.7696 (0.0786)  & 0.5892 (0.0931)  & 0.5015 (0.0831)  & \textbf{0.4085 (0.0209)}   \\
	& TV & 1.9243 (0.2079) & 1.2217 (0.0681) & 0.7939 (0.0688) & 0.7033 (0.0414) & 0.7125 (0.0490) \\
   	\hline
   \end{tabular}}
\caption{Scenario II-b: $\sqrt{p/n}>\epsilon$. Setting: $p=50, \epsilon=0.1$, and $n$ from $50$ to $1,000$. Other details are the same as above. The bold character marks the worst case among our choices of $Q$ at each  level of $n$.}\label{tab:rate-n}
\end{table}

\begin{figure}[H]
\includegraphics[width=.32\textwidth]{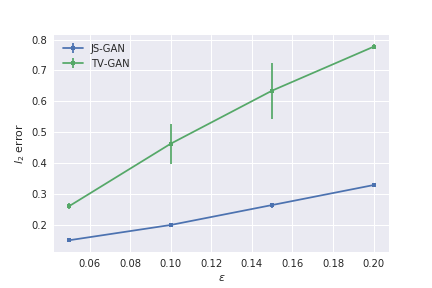}
\includegraphics[width=.32\textwidth]{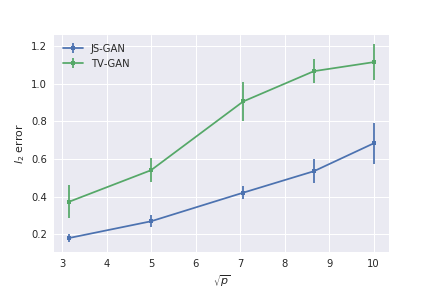}
\includegraphics[width=.32\textwidth]{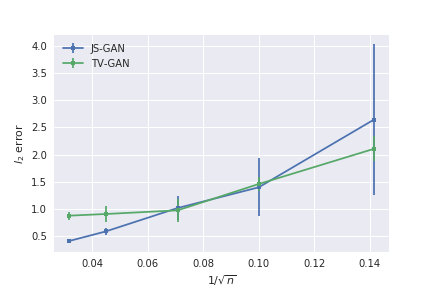}
\caption{$\ell_2$ error $\|\wh{\theta} - \theta\|$ against $\epsilon$ (left), $\sqrt{p}$ (middle) and $1/\sqrt{n}$ (right), respectively. The vertical bars indicate $\pm$ standard deviations. In all cases, the errors are approximately linear with respect to the corresponding numbers, which empirically verifies the conclusions of Theorem \ref{thm:TV-NN1} and Theorem \ref{thm:JS-NN2}.}\label{fig:rate}
\end{figure} 

\subsection{Comparisons with Other Methods}\label{sect:comp}

We perform additional experiments to compare with other methods including \textit{dimension halving} \citep{lai2016agnostic} and \textit{iterative filtering} \citep{diakonikolas2017being} under various settings. 
\begin{itemize}
\item \textit{Dimension Halving.} Experiments conducted are based on the code from \url{https://github.com/kal2000/AgnosticMeanAndCovarianceCode}. The only hyper-parameter is the threshold in the outlier removal step, and we take $C=2$ as suggested in the file \textsf{outRemSperical.m}.
\item \textit{Iterative Filtering.} Experiments conducted are based on the code from \url{https://github.com/hoonose/robust-filter}. We assume $\epsilon$ is known and take other hyper-parameters as  suggested in the file \textsf{filterGaussianMean.m}. 
\end{itemize}
We emphasize that our method does not require any prior knowledge the nuisance parameters such as the contamination proportion $\epsilon$. Tuning GAN is only a matter of optimization and one can tune parameters based on the objective function only. 

Table \ref{tab:comparison} shows the performances of JS-GAN, TV-GAN, dimension halving, and iterative filtering with i.i.d. observations sampled from $(1-\epsilon)N(0_p,I_p)+\epsilon Q$. The network structure, for both JS-GAN and TV-GAN, has one hidden layer with 20 hidden units when the sample size is 50,000 and 2 hidden units when sample size is 5,000. With fixed network structure, the hyper parameters are robust to various sampling distributions. For the network with 20 hidden units, the critical parameters to reproduce the results in the table are $\gamma_g=0.02$, $\gamma_d=0.2$, $K=5$, $T=150$ ($p=100$), $T=250$ ($p=200$), $T_0 = 25$ for JS-GAN and $\gamma_{g}=0.0001$, $\gamma_{d}=0.3$, $K=2$, $T=150$ ($p=100$), $T=250$ ($p=200$), $T_0=1$, $\lambda=0.1$ for TV-GAN, where $\lambda$ is the penalty factor of the additional regularization term (\ref{eq:alg-reg}). For the network with 2 hidden units, the critical parameters to reproduce the results below are $\gamma_g=0.01$, $\gamma_d=0.2$, $K=5$, $T=150$ ($p=100$), $T_0 = 25$ for JS-GAN and $\gamma_{g}=0.01$, $\gamma_{d}=0.1$, $K=5$, $T=150$ ($p=100$), $T_0=1$ for TV-GAN. We use Xavier initialization \citep{glorot2010understanding} for both JS-GAN and TV-GAN trainings.

To summarize, our method outperforms other algorithms in most cases. TV-GAN is good at cases when $Q$ and $N(0_p, I_p)$ are non-separable but fails when $Q$ is far away from $N(0_p, I_p)$ due to optimization issues discussed in Section \ref{sec:TV-GAN} (Figure \ref{fig:tv_landscape_10}). On the other hand, JS-GAN stably achieves the lowest error in separable cases and also shows competitive performances for non-separable ones.

\begin{table}[H]
\begin{center}
   \resizebox{\linewidth}{!}{%
   \begin{tabular}{|c|c|c|c|c|c|c|c|}
   	\hline
   	$Q$ & $n$ & $p$ & $\epsilon$ & TV-GAN & JS-GAN & Dimension Halving & Iterative Filtering \\
	\hline\hline
	$N(0.5*1_p, I_p)$  & 50,000 & 100 & .2 & \textbf{0.0953 (0.0064)} & 0.1144 (0.0154) & 0.3247 (0.0058) & 0.1472 (0.0071) \\ 
	\hline
	$N(0.5*1_p, I_p)$ & 5,000 & 100 & .2 &  \textbf{0.1941 (0.0173)} & 0.2182 (0.0527) & 0.3568 (0.0197) & 0.2285 (0.0103) \\ 
	\hline
	$N(0.5*1_p, I_p)$ & 50,000 & 200 & .2 & \textbf{0.1108 (0.0093)} & 0.1573 (0.0815) & 0.3251 (0.0078) & 0.1525 (0.0045) \\
	\hline
	$N(0.5*1_p ,I_p)$ & 50,000 & 100 &.05 & 0.0913 (0.0527) & 0.1390 (0.0050) & 0.0814 (0.0056) &\textbf{0.0530 (0.0052)}\\ 
	\hline
	$N(5*1_p, I_p)$ & 50,000 & 100 & .2 & 2.7721 (0.1285) & \textbf{0.0534 (0.0041)} & 0.3229 (0.0087) & 0.1471 (0.0059)\\ 
	\hline
	$N(0.5*1_p, \Sigma)$ & 50,000 & 100 & .2 & 0.1189 (0.0195) & \textbf{0.1148 (0.0234)} & 0.3241 (0.0088) & 0.1426 (0.0113)\\
	\hline
	Cauchy$(0.5*1_p)$ & 50,000 & 100 & .2& 0.0738 (0.0053) & \textbf{0.0525 (0.0029)} & 0.1045 (0.0071) &  0.0633 (0.0042)\\ 
	\hline
   \end{tabular}}
\caption{Comparison of various robust mean estimation methods. The smallest error of each case is highlighted in bold.}\label{tab:comparison}
\end{center}
\end{table}

\subsection{Network Structures}\label{sec:n-structure}

In this section, we study the performances of TV-GAN and JS-GAN with various structures of neural networks. The experiments are conducted with i.i.d. observations drawn from $(1-\epsilon)N(0_p, I_p)+\epsilon N(0.5*1_p, I_p)$ with $\epsilon=0.2$. Table \ref{tab:shallownet} summarizes results for $p=100$, $n\in\{5000,50000\}$ and various network structures. We observe that TV-GAN that uses neural nets with one hidden layer improves over the performance of that without any hidden layer. This indicates that the landscape of TV-GAN is improved by a more complicated network structure. However, adding one more layer does not improve the results. For JS-GAN, we omit the results without hidden layer because of its lack of robustness (Proposition \ref{prop:local-JS}). Deeper networks sometimes improve over shallow networks, but this is not always true. Table \ref{tab:deepnet} illustrates the improvements of network with more than one hidden layers over that with only one hidden layer for JS-GAN when $p\in\{200,400\}$.
We also observe that the optimal choice of the width of the hidden layer depends on the sample size.
\begin{table}[H]
   \centering
   \begin{tabular}{|c|c|c|c|c|}
   \hline
   	 Structure & $n$ & JS-GAN & TV-GAN \\
	 \hline\hline
	 100-1 & 50,000 & - & 0.1173 (0.0056) \\
	 \hline
	 100-20-1 & 50,000 & 0.0953 (0.0064) & 0.1144 (0.0154)  \\ 
	 \hline
	 100-50-1 & 50,000 & 0.2409 (0.0500) & 0.1597 (0.0219) \\
	 \hline
	 100-20-20-1 & 50,000 & 0.1131 (0.0855) & 0.1724 (0.0295)\\
	 \hline
	 100-1 & 5,000 & - & 0.9818 (0.0417)\\
	 \hline
	 100-2-1 & 5,000 & 0.1941 (0.0173) & 0.1941 (0.0173) \\
	 \hline
	 100-5-1 & 5,000 & 0.2148 (0.0241) & 0.2244 (0.0238) \\
	 \hline
	 100-20-1 & 5,000 & 0.3379 (0.0273) & 0.3336 (0.0186) \\
	\hline
   \end{tabular}
\caption{Experiment results for JS-GAN and TV-GAN with various network structures.}\label{tab:shallownet}
\end{table}

\begin{table}[H]
   \centering
   \begin{tabular}{|c|c|c|c|c|}
   \hline
   	 $p$ & 200-100-20-1 & 200-20-10-1 & 200-100-1 & 200-20-1 \\
	 \hline 
	 200 & \textbf{0.0910 (0.0056)} & 0.2251 (0.1311) & 0.3064 (0.0077) &  0.1573 (0.0815) \\
	 \hline \hline
	 $p$ & 400-200-100-50-20-1 & 400-200-100-20-1 & 400-200-20-1 & 400-200-1 \\
	 \hline
	 400 & 0.1477 (0.0053) & 0.1732 (0.0397) & \textbf{0.1393 (0.0090)} & 0.3604 (0.0990) \\
	 \hline 
   \end{tabular}
\caption{Experiment results for JS-GAN using networks with different structures.
The samples are drawn independently from $(1-\epsilon)N(0_p, I_p)+\epsilon N(0.5*1_p, I_p)$ with $\epsilon=0.2$, $p\in\{200,400\}$ and $n=50,000$.}\label{tab:deepnet}
\end{table}

\subsection{Adaptation to Unknown Covariance} \label{sec:cov}

The robust mean estimator constructed through JS-GAN can be easily made adaptive to unknown covariance structure, which is a special case of (\ref{eq:JS-GAN-EC}). We define
$$
(\wh{\theta},\wh{\Sigma}) = \arginf_{\eta\in\mathbb{R}^p,\Gamma\in\mathcal{E}_p}\sup_{D\in\mathcal{D}}\left[\frac{1}{n}\sum_{i=1}^n\log D(X_i) + E_{N(\eta,\Gamma)}\log(1-D(X_i))\right]+\log 4, $$
The estimator $\wh{\theta}$, as a result, is rate-optimal even when the true covariance matrix is not necessarily identity and is unknown (see Theorem \ref{thm:elliptical}).
	Below, we demonstrate some numerical evidence of the optimality of $\wh{\theta}$ as well as the error of $\wh{\Sigma}$ in Table \ref{tab:unknowcov}.

\begin{table}[ht]
   \centering
   \resizebox{0.9\linewidth}{!}{%
   \begin{tabular}{|c||c|c|c|c|}
   \hline
   	Data generating process & Network structure & $\|\wh{\theta}-0_p\|$ & $\opnorm{\wh{\Sigma}-\Sigma_1}$ \\
	\hline\hline
	$0.8N(0_p, \Sigma_1)+0.2N(0.5*1_p, \Sigma_2)$ & 100-20-1 & 0.1680 (0.1540) & 1.9716 (0.7405) \\ 
	\hline
	$0.8N(0_p, \Sigma_1)+0.2N(0.5*1_p, \Sigma_2)$ & 100-20-20-1 & 0.1824 (0.3034) & 1.4495 (0.6028) \\
	\hline
	$0.8N(0_p, \Sigma_1)+0.2N(1_p, \Sigma_2)$ & 100-20-1 & 0.0817 (0.0213) & 1.2753 (0.4523) \\
	\hline
	$0.8N(0_p, \Sigma_1)+0.2N(6*1_p, \Sigma_2)$ & 100-20-1 & 0.1069 (0.0357) & 1.1668 (0.1839)  \\
	\hline
	$0.8N(0_p, \Sigma_1)+0.2\textnormal{Cauchy}(0.5*1_p)$ & 100-20-1 & 0.0797 (0.0257) & 4.0653 (0.1569) \\
	\hline
   \end{tabular}}
\caption{Numerical experiments for robust mean estimation with unknown covariance trained with $50,000$ samples. The covariance matrices $\Sigma_1$ and $\Sigma_2$ are generated by the same way described in Section \ref{sec:rate}.}\label{tab:unknowcov}
\end{table}

\subsection{Adaptation to Elliptical Distributions}\label{sec:last}

To illustrate the performance of (\ref{eq:JS-GAN-EC}), we conduct a numerical experiment for the estimation of the location parameter $\theta$ with i.i.d. observations $X_1,...,X_n\sim(1-\epsilon)\textnormal{Cauchy}(\theta, I_p) + \epsilon Q$. The density function of $\textnormal{Cauchy}(\theta, I_p)$ is given by $p_{\theta}(x) \propto \left(1+\|x-\theta\|\right)^{-(1+p)/2}$.

Compared with Algorithm \ref{alg:jsgan}, the difference lies in the choice of the generator. We consider the generator
 $G_1(\xi, U)=g_\omega(\xi)U + \theta$, where $g_\omega(\xi)$ is a non-negative neural network parametrized by $\omega$  and some random variable $\xi$. The random vector $U$ is sampled from the uniform distribution on $\{u\in\mathbb{R}^p: \|u\|=1\}$. If the scatter matrix is unknown, we will use the generator $G_2(\xi, U)=g_\omega(\xi)AU + \theta$, with $AA^T$ modeling the scatter matrix.
 
 Table \ref{tab:EC} shows the comparison with other methods. Our method still works well under Cauchy distribution, while the performance of other methods that rely on moment conditions deteriorates in this setting.

\vspace{-2mm}
\begin{table}[H]
\begin{center}
\resizebox{\linewidth}{!}{
      \begin{tabular}{|c|c|c|c|c|}
   	\hline
   	 Contamination $Q$ & JS-GAN ($G_1$) & JS-GAN ($G_2$) & Dimension Halving & Iterative Filtering \\
	\hline\hline
	$\textnormal{Cauchy}(1.5*1_p, I_p)$   & \bf{0.0664 (0.0065)} & 0.0743 (0.0103) & 0.3529 (0.0543) & 0.1244 (0.0114) \\
	\hline
	$\textnormal{Cauchy}(5.0*1_p, I_p)$   & \bf{0.0480 (0.0058)} & 0.0540 (0.0064) & 0.4855 (0.0616)& 0.1687 (0.0310) \\
	\hline
	$\textnormal{Cauchy}(1.5*1_p, 5*I_p)$   & 0.0754 (0.0135) & \bf{0.0742 (0.0111)} & 0.3726 (0.0530) & 0.1220 (0.0112) \\
	\hline
	$\textnormal{Normal}(1.5*1_p, 5*I_p)$    & \bf{0.0702 (0.0064)} & 0.0713 (0.0088) & 0.3915 (0.0232) &  0.1048 (0.0288)) \\
	\hline
   \end{tabular}}
\caption{{Comparison of various methods of robust location estimation under Cauchy distributions. 
Samples are drawn from $(1-\epsilon)\textnormal{Cauchy}(0_p, I_p) + \epsilon Q$ with $\epsilon=0.2, p=50$ and various choices of $Q$. Sample size: 50,000. Discriminator net structure: 50-50-25-1. Generator $g_\omega(\xi)$ structure: 48-48-32-24-12-1 with absolute value activation function in the output layer. 
}}\label{tab:EC}
\end{center}
\end{table}
\vspace{-2mm}

\section{Discussions} \label{sec:disc}


\paragraph{Variational Lower Bounds for Robust Estimation.}

In this paper, we study robust estimation via the technique of generative adversarial nets. We show that the presence of hidden layers are crucial for the estimators trained by JS-GAN to be robust. To better understand the intuition of the results in the paper, we give some further discussion from the perspective of variational lower bounds. In view of (\ref{eq:f-div-variational}), we have
\begin{equation}
\JS(N(\theta,I_p),N(\eta,I_p))\geq \sup_{D\in\mathcal{D}}\left[E_{N(\theta,I_p)}\log D(X)+E_{N(\eta,I_p)}\log(1-D(X))\right]+\log 4, \label{eq:vlb-JS}
\end{equation}
for any discriminator class $\mathcal{D}$. Moreover, according to \citep{nguyen2010estimating,goodfellow2014generative}, the optimal discriminator is achieved at
\begin{equation}
D(X)=\frac{dN(\theta,I_p)}{dN(\theta,I_p)+dN(\eta,I_p)}(X)=\sig\left((\theta-\eta)^TX+\frac{\|\eta\|^2-\|\theta\|^2}{2}\right). \label{eq:optimal-discri}
\end{equation}
Interestingly, (\ref{eq:optimal-discri}) is in the form of logistic regression, and this immediately implies that the variational lower bound (\ref{eq:vlb-JS}) is sharp when we take $\mathcal{D}$ to be the class of logistic regression defined in (\ref{eq:TV-NN1}). Indeed, when there is no contamination or $\epsilon=0$, the sample version of JS-GAN (\ref{eq:JS-GAN-gen}) with the logistic regression discriminator class (\ref{eq:TV-NN1}) leads to the estimator $\wh{\theta}=\frac{1}{n}\sum_{i=1}^nX_i$ according to Proposition \ref{prop:local-JS}, and this is obviously a minimax optimal estimator \citep{lehmann2006theory}.

In contrast, when there is contamination or $\epsilon>0$, the logistic regression discriminator class (\ref{eq:TV-NN1}) does not even lead to a consistent estimator. This is because the population objective function to be minimized is
$$\JS\left((1-\epsilon)N(\theta,I_p)+\epsilon Q,N(\eta,I_p)\right)$$
instead of $\JS(N(\theta,I_p),N(\eta,I_p))$. The variational lower bound with the logistic regression discriminator class (\ref{eq:TV-NN1}) is not sharp anymore because of the presence of the contamination distribution $Q$. In fact, a discriminator class $\mathcal{D}$ that leads to a sharp variational lower bound has to include the function
\begin{equation}
D(X)=\frac{(1-\epsilon)dN(\theta,I_p)+\epsilon dQ}{(1-\epsilon)dN(\theta,I_p)+\epsilon dQ+dN(\eta,I_p)}(X).\label{eq:D-with-Q}
\end{equation}
However, since there is no assumption on the contamination distribution $Q$, the discriminator function (\ref{eq:D-with-Q}) can take an infinite many of forms. As a consequence, a discriminator class $\mathcal{D}$ that includes all possible functions in the form of (\ref{eq:D-with-Q}) will certainly overfit the data, and thus is not practical at all. On the other hand, we show that for the purpose of robust mean estimation, we only need to add an extra hidden layer to the logistic regression discriminator class (\ref{eq:TV-NN1}). The class (\ref{eq:JS-NN2}) of neural nets with one hidden layer does not lead to a sharp variational lower bound, but it is rich enough for the estimator trained by JS-GAN to be robust against any contamination distribution. Moreover, the complexity of the class (\ref{eq:JS-NN2}) is well controlled so that overfitting does not happen and thus the estimator achieves the minimax rate of the problem.

\paragraph{Future Projects.} Besides the topic of robust mean estimation, other important problems include robust covariance matrix estimation, robust high-dimensional regression, robust learning of Gaussian mixture models, and robust classification. It will be interesting to investigate what class of discriminators are suitable for these tasks. Another line of research is motivated from the goal to understand the class of divergence functions that are suitable for robust estimation. In addition to JS-GAN and TV-GAN studied in this paper, we would like to know whether it is possible to train robust estimators using GAN derived from other $f$-divergence functions. A further question is whether it is possible to use GAN derived from integral probability metrics including Wasserstein distance \citep{arjovsky2017wasserstein} and  maximum mean discrepancy \citep{dziugaite2015training,li2015generative,binkowski2018demystifying}. Finally, the landscapes and optimization properties of various GANs under robust estimation settings are topics to be explored.

\section{Proofs}\label{sec:pf}

In this section, we present proofs of all technical results in the paper. We first establish some useful lemmas in Section \ref{sec:aux-lem}, and the the proofs of main theorems will be given in Section \ref{sec:pf-main}.

\subsection{Some Auxiliary Lemmas}\label{sec:aux-lem}

\begin{lemma}\label{lem:EP-NN1}
Given i.i.d. observations $X_1,...,X_n\sim\mathbb{P}$ and the function class $\mathcal{D}$ defined in (\ref{eq:TV-NN1}), we have for any $\delta>0$,
$$\sup_{D\in\mathcal{D}}\left|\frac{1}{n}\sum_{i=1}^nD(X_i)-\mathbb{E}D(X)\right|\leq C\left(\sqrt{\frac{p}{n}}+ \sqrt{\frac{\log(1/\delta)}{n}}\right),$$
with probability at least $1-\delta$ for some universal constant $C>0$.
\end{lemma}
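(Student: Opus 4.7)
}

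The plan is to reduce the supremum over the sigmoid class to a uniform deviation over a VC class of half-spaces, and then invoke standard VC / Rademacher bounds together with a bounded-differences concentration step. First I would note that every $D\in\mathcal{D}$ takes values in $[0,1]$, so the single-summand influence is at most $1/n$ and McDiarmid's inequality gives
$$\sup_{D\in\mathcal{D}}\left|\tfrac{1}{n}\sum_{i=1}^nD(X_i)-\mathbb{E}D(X)\right|\le \mathbb{E}\sup_{D\in\mathcal{D}}\left|\tfrac{1}{n}\sum_{i=1}^nD(X_i)-\mathbb{E}D(X)\right|+C\sqrt{\tfrac{\log(1/\delta)}{n}}$$
with probability $1-\delta$. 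This isolates the second term $\sqrt{\log(1/\delta)/n}$ in the desired bound, so the remaining task is to show the expected supremum is $O(\sqrt{p/n})$.

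For the expected supremum, the key observation is that each $D=\sig(w^Tx+b)$ is a bounded monotone function of the linear form $w^Tx+b$. Writing
$$D(x)=\int_0^1\mathbb{I}\{D(x)\ge t\}\,dt=\int_0^1\mathbb{I}\{w^Tx+b\ge\sig^{-1}(t)\}\,dt,$$
Fubini's theorem yields
$$\left|\tfrac{1}{n}\sum_{i=1}^nD(X_i)-\mathbb{E}D(X)\right|\le\sup_{H\in\mathcal{H}}\left|\tfrac{1}{n}\sum_{i=1}^n\mathbb{I}\{X_i\in H\}-\mathbb{P}(X\in H)\right|,$$
where $\mathcal{H}$ is the class of affine half-spaces in $\mathbb{R}^p$. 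Taking the supremum over $D\in\mathcal{D}$ on the left preserves the inequality since the right-hand side is independent of $D$.

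Now I would invoke the classical fact that $\mathcal{H}$ has VC dimension $p+1$. Standard VC / Rademacher complexity bounds (e.g., via symmetrization and Dudley's entropy integral, or Vapnik--Chervonenkis inequality) then give
$$\mathbb{E}\sup_{H\in\mathcal{H}}\left|\tfrac{1}{n}\sum_{i=1}^n\mathbb{I}\{X_i\in H\}-\mathbb{P}(X\in H)\right|\lesssim\sqrt{\tfrac{p}{n}}.$$
Combining this with the McDiarmid step completes the proof. The main potential pitfall is that the parameters $w,b$ in $\mathcal{D}$ are \emph{unbounded}, so a direct Lipschitz-contraction argument applied to $\sig$ would produce an unbounded Rademacher complexity for the linear class. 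The level-set reformulation sidesteps this issue entirely by exploiting the monotonicity of $\sig$, which is the only structural property of the activation that matters here; no scale control on $(w,b)$ is needed.
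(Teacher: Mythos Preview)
Your proposal is correct. The paper's proof follows the same McDiarmid step but then works directly with $\mathcal{D}$: it symmetrizes to the Rademacher complexity $\mathbb{E}\sup_{D\in\mathcal{D}}|\tfrac{1}{n}\sum\epsilon_iD(X_i)|$, bounds it by Dudley's entropy integral, and invokes the fact that the VC dimension of $\mathcal{D}$ is $O(p)$ (which holds precisely because $\sig$ is monotone, so the subgraph class of $\mathcal{D}$ shatters no more sets than affine half-spaces do). Your route differs in that you make this monotonicity reduction explicit through the layer-cake identity, thereby bounding the deviation over $\mathcal{D}$ pointwise by the deviation over half-space indicators \emph{before} applying any complexity bound. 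What your version buys is transparency about why the unbounded $(w,b)$ are harmless---you never touch $\mathcal{D}$'s Rademacher complexity directly and so sidestep the concern you raise about Lipschitz contraction. What the paper's version buys is a slightly more streamlined argument that stays within the standard symmetrization-plus-Dudley template, at the cost of asserting the VC dimension of $\mathcal{D}$ as a black box. The two are equivalent in substance; your layer-cake step is essentially an unpacking of why that VC-dimension assertion holds.
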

\begin{proof}
Let $f(X_1,...,X_n)=\sup_{D\in\mathcal{D}}\left|\frac{1}{n}\sum_{i=1}^nD(X_i)-\mathbb{E}D(X)\right|$. It is clear that $f(X_1,...,X_n)$ satisfies the bounded difference condition. By McDiarmid's inequality \citep{mcdiarmid1989method}, we have
$$f(X_1,...,X_n)\leq \mathbb{E}f(X_1,...,X_n) + \sqrt{\frac{\log(1/\delta)}{2n}},$$
with probability at least $1-\delta$. Using a standard symmetrization technique \citep{pollard2012convergence}, we obtain the following bound that involves Rademacher complexity,
\begin{equation}
\mathbb{E}f(X_1,...,X_n)\leq 2\mathbb{E}\sup_{D\in\mathcal{D}}\left|\frac{1}{n}\sum_{i=1}^n\epsilon_i D(X_i)\right|,\label{eq:RC-NN1}
\end{equation}
where $\epsilon_1,...,\epsilon_n$ are independent Rademacher random variables. The Rademacher complexity can be bounded by Dudley's integral entropy bound, which gives
$$\mathbb{E}\sup_{D\in\mathcal{D}}\left|\frac{1}{n}\sum_{i=1}^n\epsilon_i D(X_i)\right|\lesssim \mathbb{E}\frac{1}{\sqrt{n}}\int_0^2\sqrt{\log\mathcal{N}(\delta,\mathcal{D},\|\cdot\|_n)}d\delta,$$
where $\mathcal{N}(\delta,\mathcal{D},\|\cdot\|_n)$ is the $\delta$-covering number of $\mathcal{D}$ with respect to the empirical $\ell_2$ distance $\|f-g\|_n=\sqrt{\frac{1}{n}\sum_{i=1}^n(f(X_i)-g(X_i))^2}$. Since the VC-dimension of $\mathcal{D}$ is $O(p)$, we have $\mathcal{N}(\delta,\mathcal{D},\|\cdot\|_n)\lesssim p\left(16e/\delta\right)^{O(p)}$ (see Theorem 2.6.7 of \cite{van1996weak}). This leads to the bound $\frac{1}{\sqrt{n}}\int_0^2\sqrt{\log\mathcal{N}(\delta,\mathcal{D},\|\cdot\|_n)}d\delta\lesssim \sqrt{\frac{p}{n}}$, which gives the desired result. 
\end{proof}

\begin{lemma}\label{lem:JS-NN2}
Given i.i.d. observations $X_1,...,X_n\sim\mathbb{P}$, and the function class $\mathcal{D}$ defined in (\ref{eq:JS-NN2}), we have for any $\delta>0$,
$$\sup_{D\in\mathcal{D}}\left|\frac{1}{n}\sum_{i=1}^n\log D(X_i)-\mathbb{E}\log D(X)\right|\leq C\kappa\left(\sqrt{\frac{p}{n}}+ \sqrt{\frac{\log(1/\delta)}{n}}\right),$$
with probability at least $1-\delta$ for some universal constant $C>0$.
\end{lemma}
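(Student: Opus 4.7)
The plan is to mirror the proof of Lemma \ref{lem:EP-NN1}: combine McDiarmid's inequality with a Rademacher complexity bound obtained by symmetrization and Ledoux--Talagrand contraction. The key observation that produces the $\kappa$ scaling is that for any $D\in\mathcal{D}$, writing $z_D(x)=\sum_j w_j\sigma(u_j^Tx+b_j)$, we have $|z_D(x)|\leq \sum_j|w_j|\cdot\|\sigma\|_\infty\leq\kappa$ (all three admissible activations are bounded in $[0,1]$), and $\log\sig$ is $1$-Lipschitz. Hence $|\log D(x)-\log D(x')|\leq|z_D(x)-z_D(x')|\leq 2\kappa$ for all $x,x'$.

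First I would apply McDiarmid's inequality to $f(X_1,\dots,X_n)=\sup_{D\in\mathcal{D}}|\tfrac{1}{n}\sum_i \log D(X_i)-\mathbb{E}\log D(X)|$. By the observation above, the bounded-difference constant is $c_i=2\kappa/n$, so with probability $1-\delta$,
$$f(X_1,\dots,X_n)\leq\mathbb{E}f+ \kappa\sqrt{2\log(1/\delta)/n}.$$

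Second, I would bound $\mathbb{E}f$ by Rademacher complexity. Subtracting the constant $c=-\log 2$ inside the supremum does not change $f$ (the constant cancels between the empirical and population averages), so standard symmetrization gives
$$\mathbb{E}f\leq 2\,\mathbb{E}\sup_{D\in\mathcal{D}}\Big|\tfrac{1}{n}\sum_i \epsilon_i\,\psi(z_D(X_i))\Big|,$$
where $\psi(z)=\log\sig(z)+\log 2$ satisfies $\psi(0)=0$ and $|\psi'|\leq 1$. By the Ledoux--Talagrand contraction inequality,
$$\mathbb{E}\sup_{D\in\mathcal{D}}\Big|\tfrac{1}{n}\sum_i \epsilon_i\,\psi(z_D(X_i))\Big|\leq 2\,\mathbb{E}\sup_{z\in\mathcal{Z}}\Big|\tfrac{1}{n}\sum_i\epsilon_i z(X_i)\Big|,$$
with $\mathcal{Z}=\{x\mapsto\sum_jw_j\sigma(u_j^Tx+b_j):\sum_j|w_j|\leq\kappa\}$. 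The class $\mathcal{Z}$ lies in $\kappa\cdot\mathrm{conv}(\{\pm\sigma(u^Tx+b):u\in\mathbb{R}^p,b\in\mathbb{R}\})$, and Rademacher complexity is invariant under taking absolutely convex hulls, so
$$\mathbb{E}\sup_{z\in\mathcal{Z}}\Big|\tfrac{1}{n}\sum_i\epsilon_iz(X_i)\Big|\leq \kappa\,\mathbb{E}\sup_{u,b}\Big|\tfrac{1}{n}\sum_i\epsilon_i\sigma(u^TX_i+b)\Big|.$$

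Third, I would show the base complexity is $O(\sqrt{p/n})$. For the indicator activation, $\{\mathbb{I}\{u^Tx+b\geq 1\}\}$ is exactly the class of half-space indicators and has VC-dimension $p+1$. For sigmoid and ramp, both are monotone in their scalar argument, so super-level sets are again half-spaces, giving pseudo-dimension $O(p)$. Exactly as in the proof of Lemma \ref{lem:EP-NN1}, Dudley's entropy integral together with the covering bound $\mathcal{N}(\delta,\cdot,\|\cdot\|_n)\lesssim(c/\delta)^{O(p)}$ yields $\mathbb{E}\sup_{u,b}|\tfrac{1}{n}\sum_i\epsilon_i\sigma(u^TX_i+b)|\lesssim\sqrt{p/n}$. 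Combining everything gives $\mathbb{E}f\lesssim\kappa\sqrt{p/n}$, and together with McDiarmid,
$$f\lesssim\kappa\Big(\sqrt{p/n}+\sqrt{\log(1/\delta)/n}\Big).$$

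The only subtlety, and the only part that really differs from Lemma \ref{lem:EP-NN1}, is ensuring the $\kappa$ multiplier appears in \emph{both} terms of the bound. This requires (i) observing that $\log D$ has range at most $2\kappa$ on the whole input space (not just that $D\in(0,1)$), so that McDiarmid's constant scales as $\kappa/n$, and (ii) using the convex-hull identity to pull out $\kappa$ cleanly in the Rademacher bound. Everything else is routine and follows verbatim from the proof of Lemma \ref{lem:EP-NN1}.
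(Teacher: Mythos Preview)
Your proposal is correct and follows essentially the same approach as the paper's proof: McDiarmid with bounded-difference constant $2\kappa/n$, symmetrization, Ledoux--Talagrand contraction via $\psi(z)=\log(2\sig(z))$, extraction of the factor $\kappa$ from the $\ell_1$ constraint on $w$, and finally the VC/Dudley argument on $\{\sigma(u^Tx+b)\}$ exactly as in Lemma~\ref{lem:EP-NN1}. The only cosmetic difference is that the paper phrases the extraction of $\kappa$ via H\"older's inequality rather than your convex-hull formulation, but these are equivalent here.
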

\begin{proof}
Let $f(X_1,...,X_n)=\sup_{D\in\mathcal{D}}\left|\frac{1}{n}\sum_{i=1}^n\log D(X_i)-\mathbb{E}\log D(X)\right|$. Since
$$\sup_{D\in\mathcal{D}}\sup_{x}|\log(2D(x))|\leq\kappa,$$
we have
$$\sup_{x_1,...,x_n,x_i'}\left|f(x_1,...,x_n)-f(x_1,...,x_{i-1},x_i',x_{i+1},...,x_n)\right|\leq \frac{2\kappa}{n}.$$
Therefore, by McDiarmid's inequality \citep{mcdiarmid1989method}, we have
\begin{equation}
f(X_1,...,X_n)\leq \mathbb{E}f(X_1,...,X_n) + \kappa\sqrt{\frac{2\log(1/\delta)}{n}},\label{eq:bd-diff-JS-NN2}
\end{equation}
with probability at least $1-\delta$. By the same argument of (\ref{eq:RC-NN1}), it is sufficient to bound the Rademacher complexity $\mathbb{E}\sup_{D\in\mathcal{D}}\left|\frac{1}{n}\sum_{i=1}^n\epsilon_i\log(2D(X_i))\right|$. Since the function $\psi(x)=\log(2\sig(x))$ has Lipschitz constant $1$ and satisfies $\psi(0)=0$, we have
$$\mathbb{E}\sup_{D\in\mathcal{D}}\left|\frac{1}{n}\sum_{i=1}^n\epsilon_i\log(2D(X_i))\right|\leq 2\mathbb{E}\sup_{\sum_{j\geq 1}|w_j|\leq\kappa,u_j\in\mathbb{R}^p,b_j\in\mathbb{R}}\left|\frac{1}{n}\sum_{i=1}^n\epsilon_i\sum_{j\geq 1}w_j\sigma(u_j^TX_i+b_j)\right|,$$
which uses Theorem 12 of \cite{bartlett2002rademacher}. By H\"{o}lder's inequality, we further have
\begin{eqnarray*}
&& \mathbb{E}\sup_{\sum_{j\geq 1}|w_j|\leq\kappa,u_j\in\mathbb{R}^p,b_j\in\mathbb{R}}\left|\frac{1}{n}\sum_{i=1}^n\epsilon_i\sum_{j\geq 1}w_j\sigma(u_j^TX_i+b_j)\right| \\
&\leq& \kappa \mathbb{E}\sup_{j\geq 1}\sup_{u_j\in\mathbb{R}^p,b_j\in\mathbb{R}}\left|\frac{1}{n}\sum_{i=1}^n\epsilon_i\sigma(u_j^TX_i+b_j)\right| \\
&=& \kappa\mathbb{E}\sup_{u\in\mathbb{R}^p,b\in\mathbb{R}}\left|\frac{1}{n}\sum_{i=1}^n\epsilon_i\sigma(u^TX_i+b)\right|.
\end{eqnarray*}
Note that for a monotone function $\sigma:\mathbb{R}\rightarrow [0,1]$, the VC-dimension of the class $\{\sigma(u^Tx+b):u\in\mathbb{R},b\in\mathbb{R}\}$ is $O(p)$. Therefore, by using the same argument of Dudley's integral entropy bound in the proof Lemma \ref{lem:EP-NN1}, we have
$$\mathbb{E}\sup_{u\in\mathbb{R}^p,b\in\mathbb{R}}\left|\frac{1}{n}\sum_{i=1}^n\epsilon_i\sigma(u^TX_i+b)\right|\lesssim \sqrt{\frac{p}{n}},$$
which leads to the desired result.
\end{proof}

\begin{lemma}\label{lem:JS-DNN-sig}
Given i.i.d. observations $X_1,..,X_n\sim N(\theta,I_p)$ and the function class ${\mathcal{F}}_L^H(\kappa,\tau,B)$. Assume $\|\theta\|_{\infty}\leq \sqrt{\log p}$ and set $\tau=\sqrt{p\log p}$. We have
for any $\delta>0$,
$$\sup_{D\in{\mathcal{F}}_L^H(\kappa,\tau,B)}\left|\frac{1}{n}\sum_{i=1}^n\log D(X_i)-\mathbb{E}\log D(X)\right|\leq C\kappa\left((2B)^{L-1}\sqrt{\frac{p\log p}{n}}+\sqrt{\frac{\log(1/\delta)}{n}}\right),$$
with probability at least $1-\delta$ for some universal constants $C>0$.
\end{lemma}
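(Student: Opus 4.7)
The plan is to combine a bounded-differences concentration inequality with symmetrization and an iterated contraction/peeling argument for the hidden layers of ${\mathcal{F}}_L^H(\kappa,\tau,B)$, essentially following the template established in Lemma \ref{lem:JS-NN2} but now tracking the effect of depth. The first observation is a uniform boundedness estimate: for any $D\in{\mathcal{F}}_L^H(\kappa,\tau,B)$, writing $D(x)=\sig(z(x))$ with $z(x)=\sum_j w_j \sig(\sum_h u_{jh}g_{jh}(x)+b_j)$, each inner sigmoid lies in $[0,1]$ and $\sum_j|w_j|\leq\kappa$, so $|z(x)|\leq\kappa$ and hence $|\log(2D(x))|\leq\kappa$. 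Thus $f(X_1,\dots,X_n):=\sup_D|\frac{1}{n}\sum_i\log D(X_i)-\mathbb{E}\log D(X)|$ has bounded differences with constants $2\kappa/n$, and McDiarmid's inequality gives $f\leq\mathbb{E}f+\kappa\sqrt{2\log(1/\delta)/n}$ with probability at least $1-\delta$. This already delivers the $\sqrt{\log(1/\delta)/n}$ term of the bound.

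Next, standard symmetrization reduces $\mathbb{E}f$ to twice the Rademacher complexity $R_n:=\mathbb{E}\sup_D|\frac{1}{n}\sum_i\epsilon_i\log(2D(X_i))|$. Because $\psi(z):=\log(2\sig(z))$ is $1$-Lipschitz with $\psi(0)=0$, the Ledoux--Talagrand contraction lemma peels off the outermost sigmoid. H\"older's inequality with $\sum_j|w_j|\leq\kappa$ then extracts a factor of $\kappa$ and collapses the sup over $j$ to a single inner unit, yielding
$$R_n\lesssim \kappa\,\mathbb{E}\sup_{\|u\|^2\leq 2,|b|\leq\tau,\,g_h\in\mathcal{G}_{L-1}^H(B)}\Bigl|\frac{1}{n}\sum_i\epsilon_i\,\sig\Bigl(\sum_{h=1}^{2p}u_h g_h(X_i)+b\Bigr)\Bigr|.$$
A second contraction (after centering the sigmoid at $\tfrac12$, which is $\tfrac14$-Lipschitz and vanishes at $0$) removes the inner sigmoid and leaves the linear form $\sum_h u_h g_h(X_i)+b$. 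Using $\|u\|_1\leq \sqrt{2p}\,\|u\|\leq 2\sqrt{p}$ and triangle inequality splits this sup into $2\sqrt{p}$ times the Rademacher complexity of $\mathcal{G}_{L-1}^H(B)$ plus a bias contribution bounded by $\tau\,\mathbb{E}|\frac{1}{n}\sum_i\epsilon_i|\leq \tau/\sqrt{n}$.

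The remaining core is a recursive Rademacher bound on the ReLU class $\mathcal{G}_l^H(B)$. At the base level $l=1$, since $\relu$ is $1$-Lipschitz with $\relu(0)=0$ and $\|v\|_1\leq B$, a final contraction and H\"older give a bound of $B\,\mathbb{E}\|\frac{1}{n}\sum_i\epsilon_i X_i\|_\infty$. Writing $X_i=Z_i+\theta$ with $Z_i\sim N(0,I_p)$, a standard Gaussian maximal inequality yields $\mathbb{E}\|\frac{1}{n}\sum_i\epsilon_i Z_i\|_\infty\lesssim\sqrt{\log p/n}$, and the deterministic shift contributes at most $\|\theta\|_\infty/\sqrt{n}\lesssim \sqrt{\log p/n}$ by the hypothesis $\|\theta\|_\infty\leq\sqrt{\log p}$. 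Inductively, one contraction through $\relu$ (a factor of $2$ from moving absolute values inside) together with the $\ell_1$-ball of radius $B$ produces the recursion $R(\mathcal{G}_{l+1}^H(B))\leq 2B\cdot R(\mathcal{G}_l^H(B))$, so $R(\mathcal{G}_{L-1}^H(B))\lesssim (2B)^{L-1}\sqrt{\log p/n}$. Substituting back gives $R_n\lesssim \kappa\bigl((2B)^{L-1}\sqrt{p\log p/n}+\tau/\sqrt{n}\bigr)$, and with $\tau=\sqrt{p\log p}$ both terms have the claimed form. Combining with the McDiarmid deviation completes the proof.

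The main obstacle is the careful bookkeeping of constants in the layer-by-layer contraction so that the factor $(2B)^{L-1}$ emerges cleanly: the outer sigmoid and $\ell_1$ constraint $\sum|w_j|\leq\kappa$ must be handled with H\"older rather than Cauchy--Schwarz (otherwise one incurs an unnecessary $\sqrt{H}$), and the inner hidden sigmoid must be centered before contraction so that $\psi(0)=0$ is preserved at each step. A secondary subtle point is the role of the condition $\|\theta\|_\infty\leq\sqrt{\log p}$: it is used precisely to ensure that the base-level Rademacher complexity of $\mathcal{G}_1^H(B)$, controlled through an $\ell_\infty$-norm of a Rademacher/Gaussian average, does not dominate the final rate.
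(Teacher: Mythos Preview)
Your proposal is correct and follows essentially the same route as the paper's proof: McDiarmid for the deviation term, symmetrization, then layer-by-layer peeling via Ledoux--Talagrand contraction together with H\"older on the $\ell_1$ constraints, with the base case handled by a Gaussian maximal inequality exploiting $\|\theta\|_\infty\leq\sqrt{\log p}$. The only cosmetic difference is that the paper does not explicitly center the inner sigmoid at $1/2$ before contracting (it simply absorbs the resulting constant), whereas you spell this out; both lead to the same $(2B)^{L-1}\sqrt{p\log p/n}$ bound.
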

\begin{proof}
Write $f(X_1,...,X_n)=\sup_{D\in{\mathcal{F}}_L^H(\kappa,\tau,B)}\left|\frac{1}{n}\sum_{i=1}^n\log D(X_i)-\mathbb{E}\log D(X)\right|$. Then, the inequality (\ref{eq:bd-diff-JS-NN2}) holds with probability at least $1-\delta$. It is sufficient to analyze the Rademacher complexity. Using the fact that the function $\log(2\sig(x))$ is Lipschitz and H\"{o}lder's inequality, we have
\begin{eqnarray*}
&& \mathbb{E}\sup_{D \in{\mathcal{F}}_L^H(\kappa,\tau,B)}\left|\frac{1}{n}\sum_{i=1}^n\epsilon_i\log(2D(X_i))\right| \\
&\leq& 2\mathbb{E}\sup_{\|w\|_1\leq\kappa, \|u_{j*}\|^2\leq 2,|b_j|\leq\tau, g_{jh}\in\mathcal{G}_{L-1}^H(B)}\left|\frac{1}{n}\sum_{i=1}^n\epsilon_i\sum_{j\geq 1}w_j\sig\left(\sum_{h=1}^{2p}u_{jh}g_{jh}(X_i)+b_j\right)\right| \\
&\leq& 2\kappa\mathbb{E}\sup_{\|u\|^2\leq 2,|b|\leq\tau,g_h\in\mathcal{G}_{L-1}^H(B)}\left|\frac{1}{n}\sum_{i=1}^n\epsilon_i\sig\left(\sum_{h=1}^{2p}u_hg_h(X_i)+b\right)\right| \\
&\leq& 4\kappa\mathbb{E}\sup_{\|u\|^2\leq 2,|b|\leq\tau,g_h\in\mathcal{G}_{L-1}^H(B)}\left|\frac{1}{n}\sum_{i=1}^n\epsilon_i\left(\sum_{h=1}^{2p}u_hg_h(X_i)+b\right)\right| \\
&\leq& 8\sqrt{p}\kappa\mathbb{E}\sup_{g\in\mathcal{G}_{L-1}^H(B)}\left|\frac{1}{n}\sum_{i=1}^n\epsilon_ig(X_i)\right| + 4\kappa\tau\mathbb{E}\left|\frac{1}{n}\sum_{i=1}^n\epsilon_i\right|.
\end{eqnarray*}
Now we use the notation $Z_i=X_i-\theta\sim N(0,I_p)$ for $i=1,...,n$. We bound $\mathbb{E}\sup_{g\in\mathcal{G}_{L-1}^H(B)}\left|\frac{1}{n}\sum_{i=1}^n\epsilon_ig(Z_i+\theta)\right|$ by induction.
Since
\begin{eqnarray*}
&& \mathbb{E}\left(\sup_{g\in\mathcal{G}_{1}^H(B)}\frac{1}{n}\sum_{i=1}^n\epsilon_ig(Z_i+\theta)\right) \\
&\leq& \mathbb{E}\left(\sup_{\|v\|_1\leq B}\frac{1}{n}\sum_{i=1}^n\epsilon_iv^T(Z_i+\theta)\right) \\
&\leq& B\left(\mathbb{E}\left|\frac{1}{n}\sum_{i=1}^n\epsilon_iZ_i\right|_{\infty} + \|\theta\|_{\infty}\mathbb{E}\left|\frac{1}{n}\sum_{i=1}^n\epsilon_i\right|\right) \\
&\leq& CB\frac{\sqrt{\log p}+\|\theta\|_{\infty}}{\sqrt{n}},
\end{eqnarray*}
and
\begin{eqnarray*}
&& \mathbb{E}\left(\sup_{g\in\mathcal{G}_{l+1}^H(B)}\frac{1}{n}\sum_{i=1}^n\epsilon_ig(Z_i+\theta)\right) \\
&\leq& \mathbb{E}\left(\sup_{\|v\|_1\leq B,g_h\in\mathcal{G}_l^H(B)}\frac{1}{n}\sum_{i=1}^n\epsilon_i \sum_{h=1}^Hv_hg_h(Z_i+\theta)\right) \\
&\leq& B\mathbb{E}\left(\sup_{g\in\mathcal{G}_l^H(B)}\left|\frac{1}{n}\sum_{i=1}^n\epsilon_ig(Z_i+\theta)\right|\right) \\
&\leq& 2B\mathbb{E}\left(\sup_{g\in\mathcal{G}_{l}^H(B)}\frac{1}{n}\sum_{i=1}^n\epsilon_ig(Z_i+\theta)\right),
\end{eqnarray*}
we have
$$\mathbb{E}\left(\sup_{g\in\mathcal{G}_{L-1}^H(B)}\frac{1}{n}\sum_{i=1}^n\epsilon_ig(Z_i+\theta)\right)\leq C(2B)^{L-1}\frac{\sqrt{\log p}+\|\theta\|_{\infty}}{\sqrt{n}}.$$
Combining the above inequalities, we get
$$\mathbb{E}\left(\sup_{D\in\mathcal{F}_L^H(\kappa,\tau,B)}\frac{1}{n}\sum_{i=1}^n\epsilon_i\log D(Z_i+\theta)\right)\leq C\kappa\left(\sqrt{p}(2B)^{L-1}\frac{\sqrt{\log p}+\|\theta\|_{\infty}}{\sqrt{n}}+\frac{\tau}{\sqrt{n}}\right).$$
This leads to the desired result under the conditions on $\tau$ and $\|\theta\|_{\infty}$.
\end{proof}

\subsection{Proofs of Main Theorems} \label{sec:pf-main}

\begin{proof}[Proof of Theorem \ref{thm:TV-NN1}]
We first introduce some notations. Define $F(P,\eta)=\sup_{w,b}F_{w,b}(P,\eta)$, where
$$F_{w,b}(P,\eta)=E_P\sig(w^TX+b)-E_{N(\eta,I_p)}\sig(w^TX+b).$$
With this definition, we have $\wh{\theta}=\arginf_{\eta}F(\mathbb{P}_n,\eta)$, where we use $\mathbb{P}_n$ for the empirical distribution $\frac{1}{n}\sum_{i=1}^n\delta_{X_i}$. We shorthand $N(\eta,I_p)$ by $P_{\eta}$, and then
\begin{eqnarray}
\label{eq:T1-1} F(P_{\theta},\wh{\theta}) &\leq& F((1-\epsilon)P_{\theta}+\epsilon Q,\wh{\theta}) + \epsilon \\
\label{eq:T1-2} &\leq& F(\mathbb{P}_n,\wh{\theta}) + \epsilon + C\left(\sqrt{\frac{p}{n}}+\sqrt{\frac{\log(1/\delta)}{n}}\right) \\
\label{eq:T1-3} &\leq& F(\mathbb{P}_n,\theta) + \epsilon + C\left(\sqrt{\frac{p}{n}}+\sqrt{\frac{\log(1/\delta)}{n}}\right) \\
\label{eq:T1-4} &\leq& F((1-\epsilon)P_{\theta}+\epsilon Q,\theta) + \epsilon + 2C\left(\sqrt{\frac{p}{n}}+\sqrt{\frac{\log(1/\delta)}{n}}\right) \\
\label{eq:T1-5} &\leq& F(P_{\theta},\theta) + 2\epsilon + 2C\left(\sqrt{\frac{p}{n}}+\sqrt{\frac{\log(1/\delta)}{n}}\right) \\
\label{eq:T1-6} &=& 2\epsilon + 2C\left(\sqrt{\frac{p}{n}}+\sqrt{\frac{\log(1/\delta)}{n}}\right).
\end{eqnarray}
With probability at least $1-\delta$, the above inequalities hold. We will explain each inequality. Since
$$F((1-\epsilon)P_{\theta}+\epsilon Q,\eta)=\sup_{w,b}\left[(1-\epsilon)F_{w,b}(P_{\theta},\eta)+\epsilon F_{w,b}(Q,\eta)\right],$$
we have
$$\sup_{\eta}\left|F((1-\epsilon)P_{\theta}+\epsilon Q,\eta)-F(P_{\theta},\eta)\right|\leq\epsilon,$$
which implies (\ref{eq:T1-1}) and (\ref{eq:T1-5}). The inequalities (\ref{eq:T1-2}) and (\ref{eq:T1-4}) are implied by Lemma \ref{lem:EP-NN1} and the fact that
$$\sup_{\eta}\left|F(\mathbb{P}_n,\eta)-F((1-\epsilon)P_{\theta}+\epsilon Q,\eta)\right|\leq \sup_{w,b}\left|\frac{1}{n}\sum_{i=1}^n\sig(w^TX_i+b)-\mathbb{E}\sig(w^TX+b)\right|.$$
The inequality (\ref{eq:T1-3}) is a direct consequence of the definition of $\wh{\theta}$. Finally, it is easy to see that $F(P_{\theta},\theta)=0$, which gives (\ref{eq:T1-6}). In summary, we have derived that with probability at least $1-\delta$,
$$F_{w,b}(P_{\theta},\wh{\theta})\leq 2\epsilon + 2C\left(\sqrt{\frac{p}{n}}+\sqrt{\frac{\log(1/\delta)}{n}}\right),$$
for all $w\in\mathbb{R}^p$ and $b\in\mathbb{R}$. For any $u\in\mathbb{R}^p$ such that $\|u\|=1$, we take $w=u$ and $b=-u^T\theta$, and we have
$$f(0)-f(u^T(\theta-\wh{\theta}))\leq 2\epsilon + 2C\left(\sqrt{\frac{p}{n}}+\sqrt{\frac{\log(1/\delta)}{n}}\right),$$
where $f(t)=\int \frac{1}{1+e^{z+t}}\phi(z)dz$, with $\phi(\cdot)$ being the probability density function of $N(0,1)$. It is not hard to see that as long as $|f(t)-f(0)|\leq c$ for some sufficiently small constant $c>0$, then $|f(t)-f(0)|\geq c'|t|$ for some constant $c'>0$. This implies
\begin{eqnarray*}
\|\wh{\theta}-\theta\| &=& \sup_{\|u\|=1}|u^T(\wh{\theta}-\theta)| \\
&\leq& \frac{1}{c'}\sup_{\|u\|=1}\left|f(0)-f(u^T(\theta-\wh{\theta}))\right| \\
&\lesssim& \epsilon + \sqrt{\frac{p}{n}}+\sqrt{\frac{\log(1/\delta)}{n}},
\end{eqnarray*}
with probability at least $1-\delta$. The proof is complete.
\end{proof}

\begin{proof}[Proof of Theorem \ref{thm:JS-NN2}]
We continue to use $P_{\eta}$ to denote $N(\eta,I_p)$. Define
$$F(P,\eta)=\sup_{\|w\|_1\leq\kappa,u,b}F_{w,u,b}(P,\eta),$$
where
$$F_{w,u,b}(P,\eta)=E_P\log D(X)+E_{N(\eta,I_p)}\log\left(1-D(X)\right)+\log 4,$$
with $D(x)=\sig\left(\sum_{j\geq 1}w_j\sigma(u_j^Tx+b_j)\right)$. Then,
\begin{eqnarray}
\label{eq:T2-1} F(P_{\theta},\wh{\theta}) &\leq& F((1-\epsilon)P_{\theta}+\epsilon Q,\wh{\theta}) + 2\kappa\epsilon \\
\label{eq:T2-2} &\leq& F(\mathbb{P}_n,\wh{\theta}) + 2\kappa\epsilon + C\kappa\left(\sqrt{\frac{p}{n}}+\sqrt{\frac{\log(1/\delta)}{n}}\right) \\
\label{eq:T2-3} &\leq& F(\mathbb{P}_n,\theta) + 2\kappa\epsilon + C\kappa\left(\sqrt{\frac{p}{n}}+\sqrt{\frac{\log(1/\delta)}{n}}\right) \\
\label{eq:T2-4} &\leq& F((1-\epsilon)P_{\theta}+\epsilon Q,\theta) + 2\kappa\epsilon + 2C\kappa\left(\sqrt{\frac{p}{n}}+\sqrt{\frac{\log(1/\delta)}{n}}\right) \\
\label{eq:T2-5} &\leq& F(P_{\theta},\theta) + 4\kappa\epsilon + 2C\kappa\left(\sqrt{\frac{p}{n}}+\sqrt{\frac{\log(1/\delta)}{n}}\right) \\
\nonumber &=& 4\kappa\epsilon + 2C\kappa\left(\sqrt{\frac{p}{n}}+\sqrt{\frac{\log(1/\delta)}{n}}\right).
\end{eqnarray}
The inequalities (\ref{eq:T2-1})-(\ref{eq:T2-5}) follow similar arguments for (\ref{eq:T1-1})-(\ref{eq:T1-5}). To be specific, (\ref{eq:T2-2}) and (\ref{eq:T2-4}) are implied by Lemma \ref{lem:JS-NN2}, and (\ref{eq:T2-3}) is a direct consequence of the definition of $\wh{\theta}$. To see (\ref{eq:T2-1}) and (\ref{eq:T2-5}), note that for any $w$ such that $\|w\|_1\leq \kappa$, we have
$$|\log (2D(X))|\leq \left|\sum_{j\geq 1}w_j\sigma(u_j^TX+b_j)\right|\leq \kappa.$$
A similar argument gives the same bound for $|\log(2(1-D(X)))|$. This leads to
$$\sup_{\eta}\left|F((1-\epsilon)P_{\theta}+\epsilon Q,\eta)-F(P_{\theta},\eta)\right|\leq 2\kappa\epsilon,$$
which further implies (\ref{eq:T2-1}) and (\ref{eq:T2-5}). To summarize, we have derived that with probability at least $1-\delta$,
$$F_{w,u,b}(P_{\theta},\wh{\theta})\leq 4\kappa\epsilon + 2C\kappa\left(\sqrt{\frac{p}{n}}+\sqrt{\frac{\log(1/\delta)}{n}}\right),$$
for all $\|w\|_1\leq \kappa$, $\|u_j\|\leq 1$ and $b_j$. Take $w_1=\kappa$, $w_j=0$ for all $j>1$, $u_1=u$ for some unit vector $u$ and $b_1=-u^T\theta$, and we get
\begin{equation}
f_{u^T(\wh{\theta}-\theta)}(\kappa)\leq 4\kappa\epsilon + 2C\kappa\left(\sqrt{\frac{p}{n}}+\sqrt{\frac{\log(1/\delta)}{n}}\right), \label{eq:not-c}
\end{equation}
where
\begin{equation}
f_{\delta}(t)=\mathbb{E}\log\frac{2}{1+e^{-t\sigma(Z)}}+\mathbb{E}\log\frac{2}{1+e^{t\sigma(Z+\delta)}},\label{eq:def-f-delta-t}
\end{equation}
with $Z\sim N(0,1)$.
Direct calculations give
\begin{eqnarray}
\nonumber f_{\delta}'(t) &=& \mathbb{E}\frac{e^{-t\sigma(Z)}}{1+e^{-t\sigma(Z)}}\sigma(Z) - \mathbb{E}\frac{e^{t\sigma(Z+\delta)}}{1+e^{t\sigma(Z+\delta)}}\sigma(Z+\delta), \\
\label{eq:f''} f_{\delta}''(t) &=& -\mathbb{E}\sigma(Z)^2\frac{e^{-t\sigma(Z)}}{(1+e^{-t\sigma(Z)})^2} - \mathbb{E}\sigma(Z+\delta)^2\frac{e^{t\sigma(Z+\delta)}}{(1+e^{t\sigma(Z+\delta)})^2}.
\end{eqnarray}
Therefore, $f_{\delta}(0)=0$, $f_{\delta}'(0)=\frac{1}{2}\left(\mathbb{E}\sigma(Z)-\mathbb{E}\sigma(Z+\delta)\right)$, and $f''_{\delta}(t)\geq-\frac{1}{2}$. By the inequality
$$f_{\delta}(\kappa)\geq f_{\delta}(0) + \kappa f_{\delta}'(0) - \frac{1}{4}\kappa^2,$$
we have $\kappa f_{\delta}'(0)\leq f_{\delta}(\kappa)+\kappa^2/4$. In view of (\ref{eq:not-c}), we have
\begin{eqnarray*}
&& \frac{\kappa}{2}\left(\int\sigma(z)\phi(z)dz-\int\sigma(z+u^T(\wh{\theta}-\theta))\phi(z)dz\right) \\
&\leq& 4\kappa\epsilon + 2C\kappa\left(\sqrt{\frac{p}{n}}+\sqrt{\frac{\log(1/\delta)}{n}}\right) + \frac{\kappa^2}{4}.
\end{eqnarray*}
It is easy to see that for the choices of $\sigma(\cdot)$, $\int\sigma(z)\phi(z)dz-\int\sigma(z+t)\phi(z)dz$ is locally linear with respect to $t$. This implies that
$$\kappa\|\wh{\theta}-\theta\|=\kappa\sup_{\|u\|=1}u^T(\wh{\theta}-\theta)\lesssim \kappa\left(\epsilon+\sqrt{\frac{p}{n}}+\sqrt{\frac{\log(1/\delta)}{n}}\right)+\kappa^2.$$
Therefore, with a $\kappa\lesssim \sqrt{\frac{p}{n}}+\epsilon$, the proof is complete.
\end{proof}

\begin{proof}[Proof of Theorem \ref{thm:JS-DNN}]
We continue to use $P_{\eta}$ to denote $N(\eta,I_p)$. Define
$$F(P,\eta)=\sup_{D\in\mathcal{F}_L^H(\kappa,\tau,B)}F_{D}(P,\eta),$$
with
$$
F_{D}(P,\eta)=E_P\log D(X)+E_{N(\eta,I_p)}\log(1-D(X))+\log 4.\label{eq:def-F-D-eta}
$$
Follow the same argument in the proof of Theorem \ref{thm:JS-NN2}, use Lemma \ref{lem:JS-DNN-sig}, and we have
$$F_D(P_{\theta},\wh{\theta})\leq C\kappa\left(\epsilon+(2B)^{L-1}\sqrt{\frac{p\log p}{n}}+\sqrt{\frac{\log(1/\delta)}{n}}\right),$$
uniformly over $D\in{\mathcal{F}}_L^H(\kappa,\tau,B)$ with probability at least $1-\delta$. Choose $w_1=\kappa$ and $w_j=0$ for all $w_j>1$. For any unit vector $\wt{u}\in\mathbb{R}^p$, take $u_{1h}=-u_{1(h+p)}=\wt{u}_h$ for $h=1,...,p$ and $b_1=-\wt{u}^T\theta$. For $h=1,...,p$, set $g_{1h}(x)=\max(x_h,0)$. For $h=p+1,...,2p$, set $g_{1h}(x)=\max(-x_{h-p},0)$. It is obvious that such $u$ and $b$ satisfy $\sum_hu_{1h}^2\leq 2$ and $|b_1|\leq \|\theta\|\leq \sqrt{p}\|\theta\|_{\infty}\leq\sqrt{p\log p}$. We need to show both the functions $\max(x,0)$ and $\max(-x,0)$ are elements of $\mathcal{G}_{L-1}^H(B)$. This can be proved by induction. It is obvious that $\max(x_h,0), \max(-x_h,0)\in\mathcal{G}_1^H(B)$ for any $h=1,...,p$. Suppose we have $\max(x_h,0), \max(-x_h,0)\in\mathcal{G}_l^H(B)$ for any $h=1,...,p$. Then,
\begin{eqnarray*}
\max\left(\max(x_h,0)-\max(-x_h,0),0\right) &=& \max(x_h,0), \\
\max\left(\max(-x_h,0)-\max(x_h,0),0\right) &=& \max(-x_h,0).
\end{eqnarray*}
Therefore, $\max(x_h,0), \max(-x_h,0)\in\mathcal{G}_{l+1}^H(B)$ as long as $B\geq 2$. Hence, the above construction satisfies $D(x)=\sig(\kappa\sig(\wt{u}^T(x-\theta)))\in\mathcal{F}_L^H(\kappa,\tau,B)$, and we have
\begin{equation}
f_{u^T(\wh{\theta}-\theta)}(\kappa)\leq C\kappa\left(\epsilon+(2B)^{L-1}\sqrt{\frac{p\log p}{n}}+\sqrt{\frac{\log(1/\delta)}{n}}\right), \label{eq:not-c-again}
\end{equation}
where the definition of $f_{\delta}(t)$ is given by (\ref{eq:def-f-delta-t}) with $Z\sim N(0,1)$ and $\sigma(\cdot)$ is taken as $\sig(\cdot)$. Apply the a similar  in the proof of Theorem \ref{thm:JS-NN2}, we obtain
the desired result.
\end{proof}

\begin{proof}[Proof of Theorem \ref{thm:elliptical}]
We use $P_{\theta,\Sigma,h}$ to denote the elliptical distribution $EC(\theta,\Sigma,h)$. 
Define
$$F(P,(\eta,\Gamma,g))=\sup_{\|w\|_1\leq\kappa,u,b}F_{w,u,b}(P,(\eta,\Gamma,g)),$$
where
$$F_{w,u,b}(P,(\eta,\Gamma,g))=E_P\log D(X)+E_{EC(\eta,\Gamma,g)}\log\left(1-D(X)\right)+\log 4,$$
with $D(x)=\sig\left(\sum_{j\geq 1}w_j\sigma(u_j^Tx+b_j)\right)$.
The same argument in Theorem \ref{thm:JS-NN2} leads to the fact that with probability at least $1-\delta$,
$$F_{w,u,b}(P_{\theta,\Sigma,h},(\wh{\theta},\wh{\Sigma},\wh{h}))\leq 4\kappa\epsilon + 2C\kappa\left(\sqrt{\frac{p}{n}}+\sqrt{\frac{\log(1/\delta)}{n}}\right),$$
for all $\|w\|_1\leq \kappa$, $\|u_j\|\leq 1$ and $b_j$. Take $w_1=\kappa$, $w_j=0$ for all $j>1$, $u_1=u/\sqrt{u^T\wh{\Sigma}u}$ for some unit vector $u$ and $b_1=-u^T\theta/\sqrt{u^T\wh{\Sigma}u}$, and we get
$$f_{\frac{u^T(\wh{\theta}-\theta)}{\sqrt{u^T\wh{\Sigma}u}}}(\kappa)\leq 4\kappa\epsilon + 2C\kappa\left(\sqrt{\frac{p}{n}}+\sqrt{\frac{\log(1/\delta)}{n}}\right),$$
where
$$f_{\delta}(t)=\int \log\left(\frac{2}{1+e^{-t\sigma(\Delta s)}}\right)h(s)ds+\int\log\left(\frac{2}{1+e^{t\sigma(\delta+s)}}\right)\wh{h}(s)ds,$$
where $\delta=\frac{u^T(\wh{\theta}-\theta)}{\sqrt{u^T\wh{\Sigma}u}}$ and $\Delta=\frac{\sqrt{u^T\Sigma u}}{\sqrt{u^T\wh{\Sigma}u}}$. A similar argument to the proof of Theorem \ref{thm:JS-NN2} gives
\begin{eqnarray*}
&& \frac{\kappa}{2}\left(\int\sigma(\Delta s)h(s)ds-\int \sigma(\delta+s)\wh{h}(s)ds\right) \\
&\leq& 4\kappa\epsilon + 2C\kappa\left(\sqrt{\frac{p}{n}}+\sqrt{\frac{\log(1/\delta)}{n}}\right) + \frac{\kappa^2}{4}.
\end{eqnarray*}
Since
$$\int\sigma(\Delta s)h(s)ds=\frac{1}{2}=\int\sigma(s)\wh{h}(s)ds,$$
the above bound is equivalent to
$$\frac{\kappa}{2}\left(H(0)-H(\delta)\right)\leq 4\kappa\epsilon + 2C\kappa\left(\sqrt{\frac{p}{n}}+\sqrt{\frac{\log(1/\delta)}{n}}\right) + \frac{\kappa^2}{4},$$
where $H(\delta)=\int \sigma(\delta+s)\wh{h}(s)ds$. The above bound also holds for $\frac{\kappa}{2}(H(\delta)-H(0))$ by a symmetric argument, and therefore the same bound holds for $\frac{\kappa}{2}|H(\delta)-H(0)|$.
Since $H'(0)=\int\sigma(s)(1-\sigma(s))\wh{h}(s)ds=1$, $H(\delta)$ is locally linear at $\delta=0$, which leads to a desired bound for $\delta=\frac{u^T(\wh{\theta}-\theta)}{\sqrt{u^T\wh{\Sigma}u}}$. Finally, since $u^T\wh{\Sigma}u\leq M$, we get the bound for $u^T(\wh{\theta}-\theta)$. The proof is complete by taking supreme of $u$ over the class of all unit vectors.
\end{proof}

\section*{Acknowledgement}

The research of Chao Gao was supported in part by NSF grant DMS-1712957 and NSF Career Award DMS-1847590.
The research of Yuan Yao was supported in part by Hong Kong Research Grant Council (HKRGC) grant 16303817, National Basic Research Program of China (No. 2015CB85600), National Natural Science Foundation of China (No. 61370004, 11421110001), as well as awards from Tencent AI Lab, Si Family Foundation, Baidu Big Data Institute, and Microsoft Research-Asia.

\bibliographystyle{plainnat}
\bibliography{Robust}


\end{document}